\documentclass{article}

\usepackage[preprint]{neurips_2026}

\usepackage{nicefrac}       
\usepackage{xcolor}         
\usepackage{amsmath,amssymb,amsthm,graphicx,enumitem,hyperref}
\usepackage{mathtools}
\usepackage{tikz}
\usepackage{float}
\usepackage{booktabs}
\usepackage{bm}
\usepackage{algorithm}
\usepackage{algorithmic}
\usepackage{multirow} 
\usepackage{graphbox}

\usepackage{listings}

\definecolor{codegreen}{rgb}{0,0.6,0}
\definecolor{codegray}{rgb}{0.5,0.5,0.5}
\definecolor{codepurple}{rgb}{0.58,0,0.82}
\definecolor{backcolour}{rgb}{0.95,0.95,0.92}

\lstdefinestyle{mystyle}{
    backgroundcolor=\color{backcolour},   
    commentstyle=\color{codegreen},
    keywordstyle=\color{magenta},
    numberstyle=\tiny\color{codegray},
    stringstyle=\color{codepurple},
    basicstyle=\ttfamily\footnotesize,
    breakatwhitespace=false,         
    breaklines=true,                 
    captionpos=b,                    
    keepspaces=true,                 
    numbers=left,                    
    numbersep=5pt,                  
    showspaces=false,                
    showstringspaces=false,
    showtabs=false,                  
    tabsize=2,
    language=Python
}

\newtheorem{theorem}{Theorem}
\newtheorem{lemma}{Lemma}
\newtheorem{proposition}{Proposition}
\newtheorem{corollary}{Corollary}
\newtheorem{definition}{Definition}

\newcommand{\Rpn}{\mathbb{R}_+^n}
\newcommand{\Rppn}{\mathbb{R}_{++}^n}
\newcommand{\1}{\mathbf{1}}

\newcommand{\IS}{\mathsf{IS}}
\newcommand{\MIS}{\mathsf{MIS}}
\newcommand{\supp}{\operatorname{supp}}
\newcommand{\diag}{\operatorname{diag}}
\newcommand{\adj}{\operatorname{adj}}
\newcommand{\spec}{\operatorname{spec}}

\newcommand{\GN}[1]{\mathcal{N}_{#1}}
\newcommand{\NA}{\mathsf{N}_A}
\newcommand{\CLO}[1]{{#1}_{\scriptscriptstyle I}}

\newcommand{\WRAM}{A^{\gamma, v}}

\newcommand{\ENv}{\mathcal{E}_{\gamma, v}}
\newcommand{\TENv}{\tilde{\mathcal{E}}_{\gamma, v}}
\newcommand{\stab}{\operatorname{stab}}

\title{Graph Normalization: Fast Binarizing Dynamics for Differentiable MWIS}
\author{Laurent Guigues \\ \texttt{laurent.guigues@gmail.com}}
\date{\today}

\begin{document}
\maketitle

\begin{abstract}
We introduce Graph Normalization (GN), 
a principled dynamical system on graphs
that serves as a differentiable approximation engine for the NP-hard Maximum Weight Independent Set (MWIS) problem. 
MWIS encompasses a vast class of combinatorial challenges, 
including optimal assignment, scheduling, set packing, and MAP inference in discrete Markov Random Fields. 
Unlike Belief Propagation, we prove that GN always converges to a binary indicator of a Maximum Independent Set (MIS). 
GN realizes a fast quasi-Newton descent through an exact Majorization-Minimization step, 
which systematically improves the MWIS relaxed primal objective. 
We establish an equivalence between GN and the Replicator Dynamics 
of a nonlinear evolutionary game, where vertices compete for inclusion in an independent set. 
While being a non-potential game, the GN game follows Fisher’s Fundamental Theorem of Natural Selection, 
where the average fitness is equal to the MWIS primal objective and strictly increases over time. 
This connection leads to a weighted extension of the Motzkin-Straus theorem, 
showing that MISes are in bijection with the local minima of a quadratic form over a tilted simplex. 
For the Assignment Problem, GN acts as a variant of the Sinkhorn algorithm 
that naturally converges to a hard assignment 
while generalizing to arbitrary constraint graphs beyond bipartite matching. 
We demonstrate GN's performance as a fast binarization engine 
for the state-of-the-art Bregman-Sinkhorn relaxed MWIS solver. 
On real-world benchmarks with up to 1M edges, GN identifies solutions within a $1$\% gap of the best known results 
in seconds on a laptop CPU. 
GN opens new avenues for deep learning architectures requiring differentiable, 
"hard" decisions under constraints, with potential applications in structured sparse attention, 
dynamic network pruning, and Mixture-of-Experts. 
Beyond core AI, the GN framework provides a foundation for end-to-end learning 
of constrained optimization tasks in fields such as computer vision, computational biology, and resource allocation.
\end{abstract}

\section{Introduction}
The Maximum Weight Independent Set (MWIS) problem 
-- selecting a set of non-adjacent nodes in a graph of maximum total weight -- 
is a cornerstone of combinatorial optimization.
It is equivalent to the Maximum Weight Clique (MWC) problem in the complementary graph 
and it directly subsumes classic problems such as Bipartite Graph Matching and its $n$-dimensional extensions, 
$k$-Coloring, Maximum Weight Set Packing, the multi-dimensional Knapsack problem, and Inexact Graph Matching \cite{bunke1997relation}. 
Furthermore, it serves as a pivotal framework for MAP inference in discrete Markov Random Fields \cite{sanghavi2009message}, 
Max-SAT reductions \cite{karp1972reducibility}, and Distributed Constraint Optimization (DCOP) \cite{fioretto2018constraint}. 
Many real-world problems can be formulated as MWIS instances, 
spanning from logistics and vehicle routing \cite{dumas1991pickup},
wireless link scheduling in telecommunications \cite{tassiulas1990stability},
to map label placement \cite{verweij1999optimisation}. 
In the domain of computer vision, this includes image segmentation \cite{brendel2010segmentation}, 
3D reconstruction \cite{zhuo2016independent}, multi-object tracking \cite{papageorgiou2009maximum}, 
and action recognition \cite{yu2015fast}. 
In biological sciences, MWIS addresses molecular sequence alignment \cite{kececioglu1993maximum}, 
drug discovery \cite{banchi2020molecular}
and genome sequence assembly \cite{laso2020metavar}, 
while in economics, it provides the foundation for solving combinatorial auctions \cite{de2003combinatorial} 
and is used to model portfolio management \cite{bettinelli2017branch}.

Despite its fundamental importance, MWIS is NP-hard and hard to approximate \cite{garey1979computers}, 
making exact solutions intractable for the massive real-world graphs encountered in modern applications. 
Traditional approaches often rely on LP relaxations of the associated Binary Linear Program (BLP) \cite{haller2024}; 
however, bridging the gap between fractional solutions and feasible integer assignments,
i.e., actual solutions to the MWIS problem, remains a significant challenge and often a computational bottleneck.
Even modern learning-based approaches using Graph Neural Networks (GNNs), such as \cite{karalias2020erdos}, 
typically solve a continuous relaxation of the problem and suffer from the same binarization deficit: 
they do not guarantee a valid binary output, necessitating post-hoc greedy heuristics. 
They also fail to provide the rigorous optimality gap guarantees found in classical optimization frameworks. 

Another fundamental challenge is the integration of MWIS solvers into deep learning pipelines 
to enable end-to-end learning of constrained tasks. 
Classical combinatorial algorithms are procedural and non-differentiable, 
precluding their direct use in gradient-based learning. 
While black-box differentiation \cite{poganvcic2019differentiation} circumvents this by perturbing the input, 
it incurs high computational overhead and treats the solver as an external oracle rather than a native dynamical layer. 
Alternatively, Reinforcement Learning (RL) has been employed 
to treat optimization as a sequential decision task \cite{bello2016neural, khalil2017learning}; 
however, RL suffers from significant sample inefficiency and high variance, 
making it computationally prohibitive for the production-scale graphs addressed in this work. 
While Belief Propagation (BP) is differentiable, it is notoriously prone to non-convergence on graphs with loops  \cite{sanghavi2009message}. 
To our knowledge, the only native differentiable MWIS framework providing guaranteed convergence to a valid binary output 
without external annealing schedules is the Replicator Dynamics approach of Bomze and co-workers \cite{bomze1997}. 
We detail the relationship between our approach and Bomze's 
-- and why GN provides superior scalability -- in Section \ref{sec:game}.

Crucially, the most prominent success story of differentiable combinatorial optimization
-- the Sinkhorn-Knopp (SK) algorithm -- is fundamentally an MWIS solver restricted to bipartite graphs.
Indeed, Bipartite Graph Matching (BGM), 
also known as the Assignment Problem 
-- finding a 1:1 matching between two sets which maximizes the total weight of the matches, 
is equivalent to MWIS in the line graph $L(K_{n,n})$, 
in which the role of nodes and edges is exchanged compared to the complete bipartite graph $K_{n,n}$.
BGM is an easy sub-problem of MWIS: it can be solved in polynomial time, 
e.g., by Kuhn's Hungarian algorithm \cite{kuhn1955hungarian}.
In their seminal work, Sinkhorn and Knopp have proven that alternating row and column normalization 
of a non-negative matrix with total support converges to a bi-stochastic matrix \cite{sinkhorn1967concerning}, 
i.e., to a "soft" / relaxed solution of the Assignment Problem.
The relationship between Sinkhorn-Knopp and the "hard" assignment problem 
was historically navigated through the lens of statistical physics, 
most notably in the SoftAssign \cite{gold1996softmax} 
and Graduated Assignment \cite{gold1996graduated} frameworks. 
These methods utilize deterministic annealing to solve the assignment problem 
by minimizing a free energy function \cite{yuille1990stereo, kosowsky1994invisible}; 
as the ``temperature'' parameter is decreased toward the $0$-temperature limit, 
the soft solution is forced to converge toward a ``crisp'' permutation matrix \cite{rangarajan1997convergence}. 
This lineage provided the foundational theory for 
the modern breakthrough of Optimal Transport (OT), 
where the Sinkhorn algorithm provides an efficient solution 
to a strictly convex relaxation of the Kantorovich problem \cite{cuturi2013sinkhorn}. 
Mathematically, Sinkhorn iterations act as Bregman projections 
-- minimizing the Kullback-Leibler divergence toward the constraint set -- 
where an entropic penalty effectively acts 
as the inverse of the annealing temperature \cite{cuturi2013sinkhorn, genevay2019entropy}.
The popularization of OT
within the machine learning community has, in turn, 
established ``Sinkhorn Layers'' as foundational plug-and-play modules 
across a diverse array of neural architectures 
-- ranging from differentiable ranking \cite{adams2011rankingPreprint, mena2018learning}, graph alignment \cite{zanfir2018deep}, 
image matching \cite{sarlin2020superglue} to multi-person 3D pose tracking \cite{reddy2021tessetrack}.
The utility of Sinkhorn dynamics is evident in modern Large Language Models (LLMs);
 Mixture-of-Experts (MoE) and Manifold Constrained Hyper-Connections \cite{xie2025mhc} 
 employ SK iterations to stabilize training and enforce balanced token routing \cite{clark2022unified, deepseek2024v3}.

Despite its success, SK is fundamentally constrained by its underlying bipartite topology, 
addressing primarily the assignment problem and its 
multi-dimensional generalization \cite{altschuler2023polynomial}.
While effective for 1:1 matchings, SK cannot natively resolve complex multi-way conflicts found in general relational data 
without artificial bipartitization such as in the SoftAssign algorithm for graph matching \cite{gold1996graduated}. 
Furthermore, SK typically produces soft assignments within the Birkhoff polytope, 
requiring vanishing entropy regularization via sensitive annealing schedules to recover discrete solutions \cite{rangarajan1999convergence}. 

In this paper, we introduce Graph Normalization (GN), 
a principled dynamical system that overcomes these topological constraints by generalizing Sinkhorn-like 
normalization dynamics to arbitrary constraint graphs. 
We show that GN realizes a fast quasi-Newton descent through an exact Majorization-Minimization step 
that is mathematically proven to always converge to a binary indicator of a Maximum Independent Set, 
eliminating the need for post-hoc heuristics or sensitive annealing schedules.
We establish a formal equivalence between GN and the Replicator Dynamics of a nonlinear evolutionary game, 
where the average population fitness identifies with the MWIS primal objective and is proven to strictly increase over time. 
Empirically, we demonstrate the efficiency of GN as as a high-speed binarization engine 
for the relaxed solutions provided by the state-of-the-art Bregman-Sinkhorn fractional MWIS algorithm \cite{haller2024}, 
producing high quality solutions on production-scale graphs with millions of edges. 
As a \emph{differentiable} MWIS engine, 
GN unlocks end-to-end learning of complex constrained tasks, 
bridging the gap between combinatorial optimization, 
theoretical game theory, and the structural demands of modern neural architectures.

\section{Graph Normalization: Definition and Basic Properties}

We denote $\Rpn$ and $\Rppn$ as the non-negative and positive orthants, respectively. 
We write $x\ge 0$ if $x\in\Rpn$ and $x>0$ if $x\in\Rppn$.
For $x,y \in \Rppn$, $x \odot y$ and $x \oslash y$ denote resp. componentwise 
multiplication and division.
Let $G=(V,E)$ be a simple undirected graph with $|V|=n$ 
and adjacency matrix $A \in \{0,1\}^{n \times n}$, verifying $A_{ii}=0$ as $G$ is simple.
We identify $V$ with $\{1\dots n\}$ and $G$ with its adjacency matrix.
For a vector $x\in\Rpn$, $\supp(x) = \{i: x_i > 0\} \subseteq V$ is its \emph{support}. 
Conversely, if $S\subseteq V$, then $\1_S$ denotes its \emph{indicator vector}, which is 
binary and verifies $(\1_S)_i = 1 \Leftrightarrow i\in S$.
As $\supp(\1_S)=S$, 
we identify each subset $S \subseteq V$ with its indicator vector $\mathbf{1}_S \in \{0,1\}^n$.
A vector $x\in [0,1]^n$ can be interpreted as a \emph{fuzzy set} membership vector over the set of vertices $V$. 
When $x\in \{0,1\}^n$ is binary, it is the indicator vector $\1_S$ of a \emph{crisp} subset of vertices $S\subset V$. 
$S\subset V$ is an \emph{Independent Set (IS)}, $S\in \IS$,  
when no two elements of $S$ are neighbors in $A$, i.e, $A_{ij}=0$ for all $(i,j)\in S^2$,
or equivalently, $\1_S^T A \1_S = 0$.
An IS $S$ is a \emph{Maximal Independent Set (MIS)}, $S\in \MIS$, 
if it is not included in another IS. 
For any node $i \in V$, $N(i) := \{j\in V : A_{ij}=1\}$ is its \emph{neighborhood}.
We denote by $\CLO{A} := A + I$ the \emph{closed adjacency matrix}, 
where the addition of the identity matrix $I$ corresponds to the inclusion of self-loops for every vertex, 
such that $(\CLO{A})_{ii}=1$.
For a vector $x\in \Rpn$, the values $(\CLO{A}x)_i = x_i + \sum_{j\in N(i)} x_j$ are 
called the \emph{closed neighborhood sums}.

\begin{definition}
A vector $x\in \Rpn$ is \textbf{Normalizable} on a graph $A$ if and only if $\CLO{A}x > 0$. 
$\NA := \{x \in \Rpn : \CLO{A}x > \mathbf{0}\}$ denotes the set of normalizable vectors on $A$. 
\end{definition}

The non-normalizable vectors are the vectors which are identically zero on some closed neighborhoods, i.e., 
such that for some node $i$, $x_i=0$ as well as for all its neighbors.

\begin{definition}
The \textbf{Graph Normalization (GN)} map $\GN{A}: \NA \to [0, 1]^n$ 
associates to a normalizable vector $x$ the vector:
\begin{equation}
\label{eqn:gn-map}
    \GN{A}(x) := x \oslash \CLO{A}x
\end{equation}

Element-wise, this is expressed as: $(\GN{A}(x))_i = x_i / \left(x_i + \sum_{j \in N(i)} x_j\right)$.

\textbf{Iterative Graph Normalization (IGN)} is then defined by the sequence: $x^0\in \NA$; 
 $x^{k+1} = \GN{A}(x^k)$.
\end{definition}

By GN, each node is normalized in parallel 
by the sum of its own mass and the mass of its neighbors.

When the graph is the complete graph $K_n$, 
all nodes are pairwise connected, 
and the closed neighborhood sum for every node is identical: $(\CLO{A}x)_i = \sum_{j=1}^n x_j$. 
Consequently, GN corresponds exactly to simple probabilistic vector normalization, 
projecting any $x$ onto the simplex $\Delta^{n-1}$ in a single iteration. 
The \emph{binary} fixed points of the dynamics are the one-hot vectors, 
representing the MISes of $K_n$.

When the graph is the line graph $L(K_{V_1,V_2})$ corresponding to the Assignment Problem between 
two sets $V_1, V_2$, 
GN normalizes each weight $W_{ij}$, $(i,j)\in V_1\times V_2$ 
by the sum of all competing assignments in its respective row and column:
$W_{ij} + \sum_{k \in V_1 \setminus \{i\}} W_{kj} + \sum_{l \in V_2 \setminus \{j\}} W_{il}$.
We refer to this process as matrix \emph{cross-normalization}. 
GN thus replaces the alternating row/column projections of Sinkhorn-Knopp with a single normalization step over 
all conflicting assignments. 
This scheme extends beyond line graphs of assignment problems to arbitrary constraint topologies; 
while close in spirit to parallel Bregman-Dijkstra projections \cite{bauschke1996projection}, 
GN is a distinct dynamical system.

While the MISes are the only binary fixed points of the GN map, 
the system admits spurious fractional fixed points 
which can act as attractors or create plateaus that impede convergence to binary solutions.
We provide a complete characterization of the fractional attractors of this unregularized GN map,
including an exhaustive enumeration up to $n=10$ in Appendix \ref{app:fractional-atoms}.
Furthermore, a naive initialization $x^0 = w$ provides only a transient bias; 
because the GN operator is topologically driven by the adjacency matrix $\CLO{A}$, 
the unweighted dynamics remain objective-agnostic. 
To transform GN into a robust, weight-aware MWIS engine, 
we introduce two synergistic modifications: 
(i) a regularization parameter $\gamma$ that functions as a structural catalyst for binarization, 
and (ii) a scaling of the adjacency matrix that embeds the weighted objective 
directly into the system's fixed-point topology.


\begin{definition}[Weighted Regularized Graph Normalization (WRGN)]
Let $G=(A,w)$ with $w \in \Rppn$ be a weighted graph and $\gamma > 1$ a regularization parameter. 
Let $v := \sqrt{w}$. 
the \textbf{WRGN} map is the map $\GN{\WRAM}$, where $\WRAM$ is the Weighted Regularized Adjacency Matrix:
\begin{equation}
    \WRAM := \gamma \cdot \diag(v)^{-1} A \cdot \diag(v)
\end{equation}
Component-wise, this gives $(\GN{\WRAM}(x))_i = x_i / \left( x_i + \gamma \sum_{j \in N(i)} \frac{v_j}{v_i} x_j\right)$.
\end{definition}

In this formulation, $\gamma$ breaks the symmetry between self-influence and neighbor-influence. 
We show in next sections that $\gamma > 1$ 
triggers a hyperbolic transition that makes fractional solutions repulsive, 
leaving Maximal Independent Sets (MISes) as the only asymptotically stable attractors. 
Simultaneously, the bias $v$ realigns the energy landscape to prioritize high-weight vertices, 
embedding the MWIS objective directly into the dynamics regardless of initialization. 

Note that while $A$ and $I+\gamma A$ are symmetric, $\WRAM$ is \emph{not}. 
It is \emph{diagonally similar} to $I+\gamma A$ via the constant potential $v$. 
This structure implies that the pressure exerted on node $i$ by neighbor $j$ is scaled by $v_j / v_i$, 
while the reciprocal pressure is scaled by $v_i / v_j$. 
This asymmetry is what allows high-weight nodes to exert more ``normalization pressure'' 
on their lower-weight neighbors, effectively suppressing them in the competition for inclusion in the independent set.

Before studying the dynamics induced by WRGN, 
we first provide some basic properties of the WRGN map (proofs are straightforward).
For any $A, \gamma, v$, WRGN verifies: 

\textbf{i) Boundedness:} For any $x \in \NA$, $\GN{\WRAM}(x) \in [0, 1]^n$; 

\textbf{ii) Support Invariance:} The support of a vector is invariant under WRGN, 
i.e., $\supp(\GN{\WRAM}(x)) = \supp(x)$. Hence zeros are stable and do not influence the dynamics. 
We can thus always remove zero nodes from the graph and assume that $x^0$ and all the iterates have full support, 
i.e., are strictly positive vectors.

\textbf{iii) Connected Components Independence:} Because updates only depend on neighboring values, 
the dynamics between two different connected components of the graph do not influence each other. 
We can thus assume that the graph is connected. 

\textbf{iv) Scale Invariance:} $\GN{\WRAM}(x)$ is \emph{projective} by nature; for any $\alpha > 0$, $\GN{\WRAM}(\alpha x) = \GN{\WRAM}(x)$.


\section{Energy Descent and Global Convergence}

We establish that WRGN performs a Quasi-Newton descent on an energy function, 
utilizing updates that are exact Majorization-Minimization (MM) steps \cite{lee2000algorithms}. 
By leveraging the general framework of Attouch, Bolte, and Svaiter \cite{attouch2013convergence} 
for the convergence of descent-like algorithms, 
we prove systematic convergence to a normalizable fixed point of the dynamics.

\begin{definition}
Let $A$ be the adjacency matrix of a connected undirected simple graph, $v\in\Rppn$ and $\gamma > 0$.
We define the quadratic energy on $x\in \Rppn$:
\[
\ENv(x) := \frac{1}{2} x^T \CLO{A}^{\gamma, v} x - \sum v_i^2 x_i
\]
\end{definition}

For a WRGN sequence $\{x^k\}$, we define the weighted state space $y^k := v \odot x^k$, 
which is diffeomorphic to the native state space as $v > 0$. 
The energy $\ENv(x)$ in $x$ is equivalent to the energy in $y$:
$\TENv(y) = \frac{1}{2} y^T \CLO{A}^\gamma y - v^T y$ 
where $\CLO{A}^\gamma := I + \gamma A$ is the \emph{unweighted} regularized closed adjacency matrix.
We then prove in Appendix~\ref{app:proof-strict-descent}:

\begin{theorem}[Strict Energy Descent via MM Updates]
\label{thm:strict-descent}
The WRGN update corresponds to the minimization of a convex quadratic majorant 
of the energy $\TENv$ in the weighted state space $y = v \odot x$.
Consequently, the dynamics strictly decrease the energies $\TENv$ and $\ENv$ at every step
unless the system is at a fixed point.
\end{theorem}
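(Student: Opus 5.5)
The plan is to rewrite the WRGN map in the weighted variables $y = v\odot x$ and recognize it as the exact minimizer of a diagonal quadratic majorant of $\TENv$, in the style of Lee--Seung multiplicative updates. Write $M := \CLO{A}^\gamma = I+\gamma A$, so $\TENv(y)=\tfrac12 y^T M y - v^T y$, with gradient $My-v$ and Hessian $M$. By support invariance we may assume $y>0$, so $(My)_i>0$ for all $i$.

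\emph{Step 1: the update in $y$-coordinates.} From the component formula for WRGN,
\[
x_i + \gamma\sum_{j\in N(i)}\frac{v_j}{v_i}x_j=\frac{(My)_i}{v_i}, \qquad\text{hence}\qquad x_i^{+}=\frac{v_i x_i}{(My)_i}.
\]
Multiplying by $v_i$ gives
\[
y_i^{+}=\frac{v_i\,y_i}{(My)_i}, \qquad\text{i.e.}\qquad y^{+}=y\odot v\oslash My .
\]

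\emph{Step 2: the majorant.} Let $K(y):=\diag\big((My)_i/y_i\big)$ and define
\[
Q(z;y):=\TENv(y)+(My-v)^T(z-y)+\tfrac12 (z-y)^T K(y)(z-y).
\]
The key claim is that $K(y)-M\succeq 0$. To see this, consider $D(K-M)D$ with $D=\diag(y)$.
\begin{itemize}
\item Its diagonal entries are $y_i(My)_i - y_i^2=\gamma\sum_{j\in N(i)}y_iy_j\ge 0$.
\item Its off-diagonal entries are $-\gamma A_{ij}y_iy_j\le 0$.
\item Its row sums vanish.
\end{itemize}
So it is a weighted graph Laplacian, hence PSD, and $K-M\succeq0$ follows by congruence. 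Since $\TENv$ is quadratic,
\[
Q(z;y)-\TENv(z)=\tfrac12(z-y)^T(K-M)(z-y)\ge 0,
\]
with equality at $z=y$. So $Q$ is a convex majorant touching $\TENv$ at $y$. Because $K$ is diagonal and positive definite, $Q(\cdot\,;y)$ is strictly convex, and its unique minimizer is
\[
z^\star=y-K^{-1}(My-v)=y-y\odot(My-v)\oslash My=y\odot v\oslash My=y^{+}.
\]
This is the exact-MM claim. The quasi-Newton interpretation is that $K$ is a diagonal surrogate for the Hessian $M$, since $K\succeq M$.

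\emph{Step 3: strict descent.} The MM sandwich gives
\[
\TENv(y^+)\le Q(y^+;y)\le Q(y;y)=\TENv(y).
\]
The second inequality is strict unless $y^+=y$, by strict convexity and uniqueness of the minimizer. Moreover $y^+=y$ iff $x$ is a fixed point of $\GN{\WRAM}$, since $y=v\odot x$ is a bijection on $\Rppn$. Hence $\TENv$ strictly decreases off fixed points.

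\emph{Main obstacle: transferring descent to $\ENv$.} This relies on the paper's stated identification $\ENv(x)=\TENv(v\odot x)$. The linear terms agree, since $\sum v_i^2x_i=v^Ty$. For the quadratic part, however, the symmetric part of $\diag(v)^{-1}M\diag(v)$ is not $\diag(v)M\diag(v)$ in general, so this identity must be checked carefully against the intended definition of $\ENv$. The fallback is to build a diagonal majorant directly for $\ENv$ in $x$ (again with a $D(K-S)D$ Laplacian argument for the symmetrized matrix $S$) and verify that its minimizer is WRGN. I expect this to require the exact $v$-scaling, so I would treat it as the delicate point.
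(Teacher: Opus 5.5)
\paragraph{The $\TENv$ part.} Your Steps 1--3 are correct and are essentially the paper's proof. The paper uses the majorant $G(y,y^k)=\tfrac12\sum_i \frac{(By^k)_i}{y^k_i}\,y_i^2 - v^Ty$ and cites Lee--Seung for $y^TBy\le\sum_i\frac{(By^k)_i}{y_i^k}y_i^2$. Your $Q(\cdot\,;y)$ is the same function: since $K(y)\,y=My$, you get $Q(z;y)=\TENv(z)+\tfrac12 z^T(K-M)z=\tfrac12 z^TKz-v^Tz$. Your weighted-Laplacian argument for $K\succeq M$ is the standard proof of that lemma, and your strictness argument matches the paper's.

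\paragraph{The $\ENv$ part.} Your suspicion is justified, and this gap cannot be closed, because the claim is false as written. The paper asserts ``by substitution'' that $\ENv(x)=\TENv(v\odot x)$. In fact
\[
\TENv(v\odot x)=\tfrac12\,x^T\diag(v)(I+\gamma A)\diag(v)\,x-\sum_i v_i^2x_i ,
\]
whose quadratic part differs from $x^T\bigl(I+\gamma\diag(v)^{-1}A\diag(v)\bigr)x$ already on the diagonal ($v_i^2x_i^2$ versus $x_i^2$). The two coincide only when $v=\mathbf{1}$.

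Your fallback would not help. The diagonal-majorant minimizer for $\tfrac12x^TSx-w^Tx$ is $x_i\mapsto w_ix_i/(Sx)_i$, which is not the WRGN map.

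No argument can rescue the claim. On a single vertex with $w=\tfrac14$, $\ENv(x)=\tfrac12x^2-\tfrac14x$, and WRGN sends $x^0=\tfrac14$ to $x^1=1$, so $\ENv$ rises from $-\tfrac1{32}$ to $\tfrac14$. The same happens on $K_2$ with $w=(\tfrac14,\tfrac14)$, $\gamma=2$, $x^0=(0.1,0.05)$: then $x^1=(0.5,0.2)$, and $\ENv$ rises from $-0.02125$ to $0.17$.

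The correct repair is to define the $x$-space energy as $x\mapsto\TENv(v\odot x)$, i.e.\ with matrix $\diag(v)(I+\gamma A)\diag(v)$. Strict descent for that energy follows immediately from your Step 3 via the bijection $x\mapsto v\odot x$. State and prove that version instead of pursuing the fallback.
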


This MM equivalence frames WRGN as a fast diagonal quasi-Newton method 
using a dynamic diagonal approximation of the inverse Hessian. 
As shown in Appendix~\ref{app:proof-precond}, the update is a preconditioned gradient descent:
\[
y^{k+1} - y^k = -\diag(y^{k+1} \oslash v) \nabla \TENv(y^k).
\]
Apart from approximating the inverse Hessian, 
the adaptive step size $\diag(y^{k+1} \oslash v)$ 
also naturally incorporates log-barriers which confine the dynamics in the unit box $[0,1]^n$.

Despite the strict decrease of the energy $\ENv$, there are two difficulties in proving 
systematic convergence to a single point.
First, the normalizable domain is an open set, with non-normalizable points belonging to its boundary.
We must make sure that the dynamics cannot fall into these singularities and remain confined inside a compact set. 
This is proven in Appendix \ref{app:proof-accu}.
Second, the WRGN map actually has non-isolated fractional fixed points which 
constitute continuous manifolds (see Appendix~\ref{app:fractional-atoms}). 
Even with vanishing steps (which directly follows from the MM theory), 
in general, the sequence could indefinitely drift along those manifolds and never settle down.
We leverage the general framework of Attouch, Bolte and Svaiter \cite{attouch2013convergence}
who have proven that if i) the energy is continuous at critical points and verifies the Kurdyka-Łojasiewicz (KL) property 
-- which characterizes the local geometry of a function, ensuring it is not "too flat" near its critical points,
and ii) the descent method ensures sufficient energy decrease and a relative error condition,
then it necessarily converges to a critical point of the energy.
This gives the following theorem (proof in Appendix \ref{app:proof-convergence}):

\begin{theorem}[Global Convergence of Iterated WRGN]
\label{thm:convergence}
For any simple undirected graph $A$, regularization $\gamma > 0$, 
bias $v \in \mathbb{R}^n_{++}$, and normalizable initialization $x^0\in\NA$, 
the WRGN sequence $\{x^k\}$ converges to a unique normalizable fixed point $x^*\in\NA$.
\end{theorem}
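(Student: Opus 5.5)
We will work throughout in the weighted coordinates $y^k = v\odot x^k$. Here the update reads $y^{k+1}_i = y^k_i\, v_i/(\CLO{A}^\gamma y^k)_i$, and the energy is the polynomial $\TENv(y)=\frac12 y^T\CLO{A}^\gamma y - v^Ty$. The plan is to verify the hypotheses of the abstract convergence theorem of Attouch, Bolte and Svaiter for this sequence. Since the coordinate change is a fixed diagonal diffeomorphism, convergence of $y^k$ gives convergence of $x^k$.

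\textbf{Step 1: confinement.} By boundedness of the WRGN map, $x^k\in[0,1]^n$ for $k\ge1$, so the iterates are bounded. Support invariance lets us restrict to $\supp(x^0)$; the restriction is again normalizable, so we may assume $x^0>0$. We then need a uniform positive lower bound on the closed neighborhood sums $(\CLO{A}^{\gamma,v}x^k)_i$, so that all iterates lie in a compact subset $K\subset\NA$. The MM structure of Theorem~\ref{thm:strict-descent} gives monotonicity $\TENv(y^{k+1})\le\TENv(y^k)$, and this monotonicity should be the source of the bound. Consider a node $i$ whose closed neighborhood mass $s_i=(\CLO{A}^\gamma y)_i$ is tiny. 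Then its update multiplies $y_i$ by the huge factor $v_i/s_i$, so $y_i^{k+1}$ is of order $v_i y_i/s_i$. A closed-neighborhood sum can therefore not collapse in one step without $y_i$ itself being driven up. Concretely, I would prove $(\CLO{A}^\gamma y^{k+1})_i\ge y^{k+1}_i \ge \min(y^k_i, c)$ for an explicit constant $c$ depending only on $v,\gamma$ and the maximal degree. Together with $y^k_i>0$ for $k$ bounded, this yields a uniform lower bound $\min_i (\CLO{A}^\gamma y^k)_i\ge\delta>0$ for all $k$. This is the step I expect to be the main obstacle. If the pointwise inequality fails, I would instead use the log-barrier interpretation: the sublevel set of $\TENv$ intersected with the orbit stays away from the non-normalizable boundary.

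\textbf{Step 2: sufficient decrease and relative error on $K$.} On $K$, the preconditioner $D_k=\diag(y^{k+1}\oslash v)$ has entries bounded above by $\max_i 1/v_i$. Its entries are bounded below by $\delta'>0$, because $y^{k+1}_i\ge\min(y^k_i,c)$ and the support is full. The majorant of Theorem~\ref{thm:strict-descent} is a separable quadratic whose curvature is $D_k^{-1}$, which gives $\TENv(y^k)-\TENv(y^{k+1})\ge \frac{a}{2}\|y^{k+1}-y^k\|^2$ with $a>0$ uniform. This is the sufficient decrease condition (H1). The preconditioned gradient identity gives $\nabla\TENv(y^k)=-D_k^{-1}(y^{k+1}-y^k)$. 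Since $\nabla\TENv$ is Lipschitz with constant $L=\|\CLO{A}^\gamma\|$, we obtain $\|\nabla\TENv(y^{k+1})\|\le (\|D_k^{-1}\|+L)\|y^{k+1}-y^k\|$, which is the relative error condition (H2). Continuity (H3) is trivial, since $\TENv$ is a polynomial.

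\textbf{Step 3: conclusion.} Being a polynomial, $\TENv$ is semi-algebraic and hence satisfies the KL property at every point. The Attouch--Bolte--Svaiter theorem applied to the bounded sequence $y^k\in K$ gives finite length $\sum\|y^{k+1}-y^k\|<\infty$. Hence $y^k$ converges to a single limit $y^*\in K$, and uniqueness of the limit is automatic. Because $K\subset\NA$ is compact, the GN map is continuous at $y^*$, so passing to the limit in $y^{k+1}=\GN{}(y^k)$ shows that $x^*=y^*\oslash v$ is a normalizable fixed point. The restriction to the support does not matter: coordinates outside the support stay zero, and normalizability of $x^*$ on the full graph follows from the uniform lower bound of Step 1 applied on the original graph.
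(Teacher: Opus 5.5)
\textbf{Step 1 is false, and the failure carries into H2.} Your pointwise bound $y^{k+1}_i\ge\min(y^k_i,c)$ does not hold. Since $y^{k+1}_i=v_i y^k_i/(\CLO{A}^\gamma y^k)_i$, a node whose closed-neighborhood mass stays above $v_i$ loses a fixed fraction of its mass at every step, so nothing prevents $y^k_i\to0$. Your bound would force every limit to have full support. For $\gamma>1$, however, the limits are binary MISes (Theorem~\ref{thm:binary-regularization}). Concretely, on $K_2$ with $v=\1$, $\gamma=2$ and $y^0=(1/4,3/4)$, one gets $y^k_1\to0$ with $y^{k+1}_1/y^k_1\to1/\gamma$.

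Your H1 is fine, because the surrogate curvature is $v_i/y^{k+1}_i=1/x^{k+1}_i\ge1$. H2 is not: $\|D_k^{-1}\|=\max_i 1/x^{k+1}_i\to\infty$, and no better constant exists. The iterates are strictly positive, so for $f=\TENv$ restricted to any set containing the late iterates in its interior (as with the paper's $K_\delta$), $\partial f(y^{k+1})=\{\nabla\TENv(y^{k+1})\}$. Take a limit with $y^*_i=0$ and $(\CLO{A}^\gamma y^*)_i>v_i$; this holds at every node outside a $\gamma$-stable MIS. There $\nabla\TENv(y^{k+1})_i\to(\CLO{A}^\gamma y^*)_i-v_i>0$ while $\|y^{k+1}-y^k\|\to0$, so no $b$ works.

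The paper uses the same skeleton as you. It differs only in Step~1: its Proposition in Appendix~\ref{app:proof-accu} keeps the orbit away from the non-normalizable set and never claims a coordinatewise lower bound. Its H2 check, however, invokes a uniform Lipschitz constant for the surrogate gradient, whose curvature $v_i/y^{k+1}_i$ blows up in exactly the same way. So importing the paper's confinement, or your log-barrier fallback, does not rescue your Step~2.

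\textbf{A repair that needs no confinement.} Apply Attouch--Bolte--Svaiter to the polynomial (hence KL) function $g(u)=\TENv(u\odot u)$ in the coordinates $u=\sqrt{y}$, and write $s^k=\CLO{A}^\gamma y^k$.
\begin{itemize}
\item H1: the MM decrease gives $g(u^k)-g(u^{k+1})\ge\frac12\sum_i \frac{v_i}{y^{k+1}_i}(y^k_i-y^{k+1}_i)^2\ge\frac12\sum_i v_i(u^k_i-u^{k+1}_i)^2$.
\item H2: use the identity $u^{k+1}_i-u^k_i=u^{k+1}_i\,(v_i-s^k_i)/\bigl(\sqrt{v_i}(\sqrt{v_i}+\sqrt{s^k_i})\bigr)$, the upper bound on $s^k$, and $\|y^{k+1}-y^k\|\le 2\max_j\sqrt{v_j}\,\|u^{k+1}-u^k\|$. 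Together these give $|\nabla g(u^{k+1})_i|=2u^{k+1}_i|s^{k+1}_i-v_i|\le b\|u^{k+1}-u^k\|$ with a uniform $b$.
\end{itemize}
Hence $y^k\to y^*$. Normalizability then follows when $x^0>0$: if $(\CLO{A}^\gamma y^*)_i=0$, then eventually $y^{k+1}_i>2y^k_i$ at every step, which contradicts boundedness.

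\textbf{Your final sentence is false as stated.} Take the path on nodes $1,2,3$ with edges $\{1,2\},\{2,3\}$, $v=\1$, $\gamma=2$, and $x^0=(0.6,0.4,0)\in\NA$. The orbit runs the $K_2$ dynamics on $\{1,2\}$ and converges to $(1,0,0)$, where node $3$ has zero closed-neighborhood sum. The theorem therefore needs $x^0>0$ (the paper assumes this tacitly by discarding zero nodes), or normalizability only relative to the support subgraph.
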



\section{Weighted Mass Increase, MWIS Objective Alignment and Binarization}

We now characterize the specific behavior of WRGN as a binarizing engine that 
systematically increases the weighted mass of the system, i.e., the relaxed primal objective of the MWIS problem.
We prove in Appendix~\ref{app:proof-mass-increase}:

\begin{theorem}[Weighted Mass Growth]
\label{thm:mass-increase}
Consider the WRGN sequence $\{y^k\}$ in the weighted state space $y = v \odot x$. 
The weighted mass $M_v(y) := \sum_{i=1}^n v_i y_i$ is strictly increasing: $M_v(y^{k+1}) > M_v(y^k)$ 
whenever the sequence is not at a fixed point ($y^k \ne y^{k+1}$).

In particular, when applied to the MWIS problem with $v = \sqrt{w}$, 
WRGN strictly increases the relaxed MWIS objective $\sum w_i x_i$ at each iteration.
\end{theorem}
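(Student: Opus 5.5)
The plan is to compare $M_v(y^{k+1})$ with $M_v(y^k)$ through two inequalities: a Cauchy--Schwarz lower bound and an invariant coming from the MM structure of Theorem~\ref{thm:strict-descent}. Write $B := \CLO{A}^\gamma = I+\gamma A$ and $q(y) := y^T B y$. In the weighted state space the WRGN update reads $y'_i = v_i y_i/(By)_i$, so that $y'_i (By)_i = v_i y_i$. The MM surrogate at $y$ is
\[
Q_y(z) = \sum_i \frac{(By)_i}{2y_i} z_i^2 - v^T z .
\]
It majorizes $\TENv$ because $z^T B z \le \sum_i (By)_i z_i^2/y_i$ for entrywise nonnegative $B$, it touches $\TENv$ at $z=y$, and its minimizer is $y'$. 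Substituting $y'$ gives
\[
Q_y(y') = \tfrac12 \sum_i v_i y'_i - v^T y' = -\tfrac12 M_v(y').
\]

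\textbf{Step 1 (invariant).} Majorization gives $\TENv(y') \le Q_y(y')$, that is,
\[
\tfrac12 q(y') - M_v(y') \le -\tfrac12 M_v(y').
\]
Hence $q(y^k) \le M_v(y^k)$ for every $k \ge 1$. This invariant cannot hold for arbitrary $x^0$. By scale invariance (iv), $y^1$ does not change when $x^0$ is scaled, while $M_v(y^0)$ grows linearly with the scale. For example, on a single node with large $x^0$ the mass decreases at the first step. So the statement should be read for $k \ge 1$, or equivalently for $x^0$ satisfying $q(y^0)\le M_v(y^0)$, which holds for any output of a GN step. I expect this to be the main subtlety.

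\textbf{Step 2 (lower bound).} Write $M = M_v(y)$ and $q = q(y)$. By Cauchy--Schwarz,
\[
M^2 = \Bigl(\sum_i \frac{v_i\sqrt{y_i}}{\sqrt{(By)_i}}\cdot\sqrt{y_i (By)_i}\Bigr)^2 \le \Bigl(\sum_i \frac{v_i^2 y_i}{(By)_i}\Bigr)\, q = M_v(y')\, q .
\]
The same bound follows by applying the MM inequality to the optimally rescaled point $(M/q)\,y$, which has the same image under the update by projectivity. Thus $M_v(y') \ge M^2/q$. Combined with Step 1, $q \le M$, this gives $M_v(y') \ge M$.

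\textbf{Step 3 (strictness).} Suppose $M_v(y') = M_v(y)$. Then both inequalities are tight: $q = M$, and equality holds in Cauchy--Schwarz. Cauchy--Schwarz equality means $v_i/(By)_i$ is a constant $c$ on the support, so $y' = c\,y$. Then $M_v(y') = cM$, and tightness forces $c = 1$, so $y' = y$ is a fixed point.

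Finally, translate back to the native variables: $\sum_i v_i y_i = \sum_i v_i^2 x_i = \sum_i w_i x_i$. This gives the relaxed MWIS statement.
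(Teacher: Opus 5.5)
Your proof is correct and has the same two-inequality skeleton as the paper's. The first is an invariant $q(y)\le M_v(y)$ for every image $y$ of the map. The second is the bound $M_v(y^{k+1})\ge M_v(y^k)^2/q(y^k)$; your Cauchy--Schwarz step is the paper's Jensen inequality for $t\mapsto 1/t$ with weights $v_iy_i^k/M_v(y^k)$. Your restriction to $k\ge 1$ is exactly the hypothesis ``$y^k$ is the image of a prior WRGN step'' that the paper adds in its appendix version, and your single-node example correctly shows that the main-text statement needs it.

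The real difference is how you obtain the invariant. The paper proves it by an explicit algebraic sum-of-squares identity over the edges. You read it off the MM structure of Theorem~\ref{thm:strict-descent}: the surrogate's value at its minimizer is $-\tfrac12 M_v(y')$, and majorization gives $\TENv(y')\le-\tfrac12M_v(y')$. The two are the same inequality, since the majorization gap $Q_y(y')-\TENv(y')$ is exactly half the paper's sum of squares. Your derivation explains where it comes from, though. Together with your rescaling remark ($Q_{cy}=Q_y$, hence $-\tfrac12M_v(y')\le\min_{c>0}\TENv(cy)=-M^2/(2q)$), it shows that mass growth follows from the MM property plus projectivity rather than being a separate algebraic fact.

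Your strictness argument is also more economical. You need only the equality case of Cauchy--Schwarz ($y'=cy$, then $c=1$), whereas the paper relies on the equality case of its sum-of-squares lemma.
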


The WRGN dynamics are thus governed by a dual Lyapunov structure: 
while the energy $\ENv$ decreases, the weighted mass, i.e. the relaxed MWIS objective, strictly increases.
Recall that the energy $\ENv(x) = Q(x) - M(x)$ 
opposes two conflicting terms: a quadratic term $Q(x) = \frac{1}{2} x^T \CLO{A}^{\gamma, v} x$ 
which is minimized when $x$ is an Independent Set, 
and the weighted mass term $M(x) = w^T x$ which is maximized on $[0,1]^n$ 
when all nodes are selected: $x=\1$.
WRGN minimization path on $\ENv$ is thus driven by a mass growing force
toward an independent solution, i.e. a local MWIS.

The WRGN map however admits non-binary fixed points in general.
The following Theorem shows that the regularization $\gamma$ acts as a symmetry-breaking force 
that makes those fractional solutions repulsive,
forcing the WRGN sequence to converge to a MIS (proof in Appendix \ref{app:proof-binary-regularization}):

\begin{theorem}[Binary Regularization]
\label{thm:binary-regularization}
For any weighted graph $G=(A, w)$ and regularization $\gamma > 1$, 
let $\mathcal{G}_\gamma = \GN{A^{\gamma, \sqrt{w}}}$ be the WRGN map. Then:
\begin{itemize}
\item{Fractional Repulsivity:} 
Every non-binary fixed point of $\mathcal{G}_\gamma$ is strictly repulsive.
\item{Binary Stability:}
A binary fixed point $x$ (with MIS $M=\text{supp}(x)$) 
is asymptotically stable iff it is \textbf{$\gamma$-stable}, i.e, verifies $\text{stab}_{\gamma, w}(M) > 1$, 
where:
\begin{equation}
\label{eqn:mis-stability}
\stab_{\gamma, w}(M) := \gamma \min_{i \notin M} \sum_{j \in N(i) \cap M} \sqrt{w_j / w_i}.
\end{equation}
\item{Optimality:}
Every MWIS of $G$ is a stable attractor of the dynamics ($\text{stab} \ge \gamma > 1$).
\item{Convergence:}
Any sequence $\{x^k\}$ initialized on a normalizable point converges to a $\gamma$-stable MIS.
The weighted mass $\sum w_i x_i^k$ strictly increases until convergence.
\end{itemize}
\end{theorem}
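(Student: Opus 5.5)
We will work throughout in the weighted state space $y = v\odot x$ with $v=\sqrt{w}$. There the map reads
\[
y_i \mapsto F_i(y) = \frac{v_i y_i}{y_i+\gamma (Ay)_i},
\]
which follows from $(\CLO{A}^{\gamma,v}x)_i = (y_i+\gamma(Ay)_i)/v_i$. The proof rests on a linearization at fixed points.

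\emph{Fixed points.} A point with support $S$ is fixed iff $y_i+\gamma(Ay)_i = v_i$ for every $i\in S$. Equivalently, $y_S$ solves $(I+\gamma A_{SS})y_S = v_S$, i.e. $y_S$ is a critical point of $\TENv$ restricted to the face $\{y_{V\setminus S}=0\}$.

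\emph{Jacobian.} At a fixed point with support $S$,
\[
\partial F_i/\partial y_j = \delta_{ij} - \frac{y_i}{v_i}\,(\delta_{ij}+\gamma A_{ij})
\]
for $i\in S$. For $i\notin S$ the row is diagonal with entry
\[
\lambda_i = \frac{v_i}{\gamma (Ay)_i} = \frac{v_i}{\gamma\sum_{j\in N(i)\cap S} y_j}.
\]
The transversal directions therefore decouple, and the spectrum is $\{\lambda_i\}_{i\notin S}$ together with the spectrum of
\[
J_S = I - D(I+\gamma A_{SS}), \qquad D=\diag(y_S\oslash v_S)\succ 0 .
\]
$J_S$ is similar to $I-D^{1/2}(I+\gamma A_{SS})D^{1/2}$, so its eigenvalues are real. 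Since $F_i\le v_i$ on the iterates, each $0<y_i/v_i\le 1$, and hence eigenvalues are $<1$ exactly in directions where $H_S = I+\gamma A_{SS}$ is positive definite, and $>1$ in directions where $H_S$ has a negative eigenvalue. This is exactly the MM/preconditioned-gradient picture of Theorem~\ref{thm:strict-descent}: the face Hessian of $\TENv$ is $H_S$.

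\emph{Fractional repulsivity.} Suppose the fixed point is non-binary. Then either $S$ is not independent, or some $x_i=y_i/v_i<1$ on $S$. If $S$ contains an edge $(i,j)$, test $H_S$ with $e_i-e_j$: this gives $2-2\gamma<0$ since $\gamma>1$. So $H_S$ is indefinite and $J_S$ has an eigenvalue $>1$, i.e. the point is strictly repulsive.

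If instead $S$ is independent, then $H_S=I$ and the fixed-point equation forces $y_S=v_S$, i.e. $x_S=\1$. Hence such a point is binary, a contradiction. So every non-binary fixed point has an edge inside its support and is repulsive. (Independent $S$ that are not maximal are non-normalizable in the original sense, or otherwise have some $\lambda_i=\infty$; they are excluded or trivially unstable.)

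\emph{Binary stability and optimality.} At $x=\1_M$ with $M$ a MIS we have $y_M=v_M$ and $J_M = I-I = 0$. The stability condition is therefore governed only by the transversal eigenvalues
\[
\lambda_i = \frac{v_i}{\gamma\sum_{j\in N(i)\cap M}v_j}, \qquad i\notin M .
\]
We get $\max_i\lambda_i<1$ iff $\stab_{\gamma,w}(M)>1$, which gives asymptotic stability. If some $\lambda_i>1$, the point is unstable. The borderline case $\lambda_i=1$ needs a second-order check: we expect $y_i\mapsto y_i\,v_i/(y_i+\gamma(Ay)_i)$ to decrease slowly, but the sharp "iff" in this case is a point to verify carefully.

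For a MWIS $M$ and any $i\notin M$, maximality gives $N(i)\cap M\neq\emptyset$. Optimality then gives $w_i\le \sum_{j\in N(i)\cap M}w_j$, since otherwise swapping $i$ in for $N(i)\cap M$ would improve the weight. From this we need
\[
\sum_{j\in N(i)\cap M}\sqrt{w_j/w_i}\ge 1 .
\]
This follows because $\sum_j\sqrt{a_j}\ge\sqrt{\sum_j a_j}\ge 1$ with $a_j=w_j/w_i$. Hence $\stab\ge\gamma>1$.

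\emph{Convergence.} By Theorem~\ref{thm:convergence}, $x^k\to x^*$, a normalizable fixed point. The main obstacle is excluding convergence to a repulsive fixed point. Repulsive here means merely that an unstable eigenvalue exists, so the sequence could in principle approach along the stable manifold, and fractional fixed points come in continua.

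I would argue as follows. At a fractional limit, the direction $e_i-e_j$ realizes strict descent of $\TENv$ to second order. Since $\TENv$ is strictly decreasing along the orbit (Theorem~\ref{thm:strict-descent}) and the limit is a saddle of the face energy, a KL/center-stable-manifold argument gives that the set of initializations converging to it has measure zero. I expect this is the point where the statement must be read as generic, or where the proof must exploit the invariance of supports, e.g. by showing directly that $y_i-y_j$ grows. For binary unstable limits, $\lambda_i>1$ with $y_i^k>0$ gives $y_i^{k+1}\ge y_i^k$ near the limit, contradicting $y_i^k\to0$. This rules them out outright.

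Finally, the mass statement is Theorem~\ref{thm:mass-increase}, since $M_v(y)=\sum w_ix_i$.
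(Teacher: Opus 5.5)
For fractional repulsivity, the transversal spectrum at a MIS, and stability of MWISes, your argument is the paper's. It uses the block upper-triangular Jacobian and the similarity of the support block to $I-D^{1/2}(I+\gamma A_{SS})D^{1/2}$, so that inertia forces an eigenvalue $>1$; your test vector $e_i-e_j$ is just the Rayleigh-quotient proof of the bound $\lambda_{\min}(A)\le-1$ that the paper cites. It then uses the transversal eigenvalues $v_i/(\gamma\sum_{j\in N(i)\cap M}v_j)$, and concavity of $\sqrt{\cdot}$ plus the swap argument. The two points you leave open are exactly the two the paper glosses over, and both are real gaps in your proof.

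\textbf{The borderline case $\stab_{\gamma,w}(M)=1$.} Your expectation that $y_i$ decays slowly is wrong. If it were right, the ``iff'' would fail, because $\1_M$ would then be asymptotically stable with $\stab_{\gamma,w}(M)=1$. What it misses is that the $M$-neighbours of $i$ react to $y_i$. Suppose $\gamma\sum_{j\in N(i)\cap M}v_j=v_i$ and put $m:=|N(i)\cap M|\ge1$. Work on the invariant face where only $y_i$ and $y_M$ are nonzero; it is normalizable because $M$ is maximal. For $j\in N(i)\cap M$ you have $y_j^{k+1}=v_j-\gamma v_j y_i^k/(y_j^k+\gamma y_i^k)$. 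If an orbit with $y_i^0>0$ tended to $\1_M$, then $\gamma\sum_j y_j^{k+1}=v_i-\gamma^2 m\,y_i^k(1+o(1))$ and $y_i^{k+1}=y_i^k(1+o(1))$, hence
\[
\frac{y_i^{k+2}}{y_i^{k+1}}=\frac{v_i}{v_i-(\gamma^2 m-1+o(1))\,y_i^k}>1
\]
for large $k$, since $\gamma^2 m>1$. This contradicts $y_i^k\to0$, so $\1_M$ is not asymptotically stable and the ``only if'' holds. It is your own ratio test taken to second order. The paper has no argument here; its sentence that asymptotic stability ``requires'' $\rho(J)<1$ is false for general maps (take $t\mapsto t-t^3$).

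\textbf{Convergence.} The paper proves this item with the one-line claim that a convergent WRGN sequence ``must converge to a stable attractor''. As you note, that does not follow, and the item is in fact false as stated. On unweighted $K_2$ the point $x_1=x_2=1/(1+\gamma)$ is a normalizable fixed point; Appendix~\ref{app:phase-transitions} itself says the symmetric start stays fixed. Likewise $x^0=\1_M$ is fixed for any MIS with $\stab_{\gamma,w}(M)<1$. So genericity and full support must be assumed.

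Your ratio argument excluding binary limits with some $\lambda_i>1$ is rigorous and goes beyond the paper. However, limits at MISes with $\stab_{\gamma,w}(M)=1$ escape its first-order form. For fractional limits you have only a sketch, and it has a concrete obstruction. $F$ is homogeneous of degree $0$, so $DF(y)\,y=0$ and $DF$ is singular everywhere. The measure-zero step of the center-stable-manifold argument needs preimages of null sets to be null, i.e.\ an a.e.\ invertible Jacobian. You must therefore pass to the projectivised map, e.g.\ the replicator map of Theorem~\ref{thm:replicator} on the simplex, and check a.e.\ invertibility there. You also need a countable (Lindel\"of) cover of the continua of fractional fixed points.

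Finally, the mass clause holds only from $k\ge1$, because the proof of Theorem~\ref{thm:mass-increase} needs $y^k$ to be an image of the map. For example, $x^0=\1$ on unweighted $K_2$ has mass $2$, while $x^1$ has mass $2/(1+\gamma)$.
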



The threshold $\gamma=1$ represents a topological phase transition in the energy landscape. 
For $\gamma > 1$, fractional equilibria become repulsive, 
ensuring convergence to a binary MIS, a feasible solution of the combinatorial optimization problem.
Beyond ensuring proper binarization, $\gamma$ acts as a selectivity filter: 
as $\gamma \to 1^+$, the basins of attraction for sub-optimal MISes vanish, 
increasing the likelihood that the system settles into the basin of the global MWIS. 
This selectivity introduces a speed-accuracy trade-off: 
as the landscape flattens near $\gamma = 1$, 
the convergence rate slows.
The evolution of the energy landscape as $\gamma$ increases for the particular case of $K_2$ is 
studied in Appendix~\ref{app:phase-transitions}.
Unlike deterministic annealing or entropy regularization, 
which only recover discrete solutions in the theoretical zero-temperature limit, 
WRGN operates in the low-entropy regime natively. 
Binarization is the natural limit of a parameter-free, stable dynamical system.
\section{Graph Normalization as an Evolutionary Game}
\label{sec:game}

The WRGN dynamics admit a natural interpretation through Evolutionary Game Theory (EGT). 
\emph{Replicator Dynamics (RD)}  -- the canonical deterministic model of EGT -- 
describe how strategy frequencies $p_i$ in a population evolve 
based on relative fitness $f_i(p)$ \cite{taylor1978,hofbauer1998}. 
We define the \emph{simplex state} $p^k$ of the WRGN system 
by normalizing the weighted state: $p_i^k := v_i y_i^k / M_v^k$, 
where $M_v^k := \sum v_j y_j^k$ is the total weighted mass. 
We then obtain the following Theorem (proof in Appendix \ref{app:proof-RD}):

\begin{theorem}[WRGN as Nonlinear Replicator]
\label{thm:replicator}
The dynamics of the simplex state $p^k$ 
of a WRGN sequence generated by the map $\GN{\CLO{A}^{\gamma, v}}$ obey a nonlinear non-potential replicator equation:
\begin{equation}
p_i^{k+1} = p_i^k \frac{f_i(p^k)}{\bar{f}(p^k)}, \quad \text{with fitness} \quad f_i(p) = \frac{v_i}{(\CLO{A}^{\gamma}(p \oslash v))_i},
\end{equation}
where $\bar{f}(p^k) := \sum p^k_i f_i(p^k)$ is the average fitness of the population.

Moreover, $M_v^{k+1} = \bar{f}(p^k)$: 
the total weighted mass of each generation 
is exactly the average fitness of the preceding one, 
which increases monotonically throughout the dynamics.
\end{theorem}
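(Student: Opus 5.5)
The plan is to write one WRGN step in the weighted coordinates $y = v\odot x$, substitute the simplex state, and read off the replicator form and the mass identity. Monotonicity of the mass will then come from Theorem~\ref{thm:mass-increase}.

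First we express the update in $y$. Since $\WRAM = \gamma\,\diag(v)^{-1}A\,\diag(v)$, we have $(\CLO{A}^{\gamma,v}x)_i = x_i + \gamma\sum_{j\in N(i)} (v_j/v_i)x_j = (\CLO{A}^\gamma y)_i / v_i$. The update $x^{k+1}_i = x^k_i/(\CLO{A}^{\gamma,v}x^k)_i$ therefore becomes
\[
y^{k+1}_i = v_i x^{k+1}_i = \frac{v_i\, y^k_i}{(\CLO{A}^\gamma y^k)_i}.
\]

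Next we substitute the simplex state. By definition $y^k = M_v^k\,(p^k\oslash v)$, so by linearity $(\CLO{A}^\gamma y^k)_i = M_v^k\,(\CLO{A}^\gamma(p^k\oslash v))_i$. Using $y^k_i = M_v^k p^k_i/v_i$, we get
\[
y^{k+1}_i = \frac{v_i\,(M_v^k p^k_i/v_i)}{M_v^k\,(\CLO{A}^\gamma(p^k\oslash v))_i} = \frac{p^k_i\, f_i(p^k)}{v_i},
\]
with $f_i(p) = v_i/(\CLO{A}^\gamma(p\oslash v))_i$ as in the statement. The factors $M_v^k$ cancel, which reflects the scale invariance (iv). Summing against $v$ gives
\[
M_v^{k+1} = \sum_i v_i y^{k+1}_i = \sum_i p^k_i f_i(p^k) = \bar f(p^k),
\]
which is the mass identity. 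Then $p^{k+1}_i = v_i y^{k+1}_i / M_v^{k+1} = p^k_i f_i(p^k)/\bar f(p^k)$, which is the replicator equation. Every quantity here is well defined: by support invariance (ii) we may take $y^k>0$, and normalizability gives $\CLO{A}^\gamma y^k>0$.

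Finally, monotonicity follows directly. Since $\bar f(p^k) = M_v^{k+1}$, Theorem~\ref{thm:mass-increase} gives $\bar f(p^{k}) = M_v^{k+1} > M_v^{k} = \bar f(p^{k-1})$ away from fixed points. So the average fitness, evaluated along the trajectory, increases strictly.

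The remaining claim, and the only delicate one, is that the game is ``non-potential'': there is no function $\Phi$ on the simplex with $f_i = \partial_i\Phi$, even up to the usual projective ambiguity. I would verify this through the Jacobian symmetry test. We have
\[
\partial_j f_i = -\frac{v_i\,(\CLO{A}^\gamma)_{ij}/v_j}{(\CLO{A}^\gamma(p\oslash v))_i^2}.
\]
For an edge $ij$ this is generally not equal to $\partial_i f_j$, because the denominators are squares of different neighborhood sums and the weight factors are $v_i/v_j$ versus $v_j/v_i$. For the projective version of the test, I would exhibit an explicit counterexample, such as a path on three vertices with unit weights, where the mixed partials fail to match even after adding any function of $\sum_k p_k$. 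This part needs care in stating exactly what ``potential'' means on the simplex. The algebraic identities themselves are routine.
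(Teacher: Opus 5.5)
Your proposal is correct and follows the paper's proof essentially step for step. It uses the same substitution $y^k = M_v^k(p^k\oslash v)$ with cancellation of $M_v^k$, summation to get $M_v^{k+1}=\bar f(p^k)$, normalization to the replicator form, Theorem~\ref{thm:mass-increase} for monotonicity, and the same Jacobian-symmetry test comparing $\partial_j f_i$ with $\partial_i f_j$ for non-potentiality. Your extra care about what ``potential'' means on the simplex, checking symmetry only along tangent directions on a three-vertex path, refines the paper's orthant-level test rather than changing the route; it is also well placed, since with two strategies (e.g.\ $K_2$) the tangent space is one-dimensional and every game is trivially potential there.
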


Replicators capture the principle of natural selection: 
strategies with above-average fitness replicate, 
while those with below-average fitness decrease in frequency. 
The dynamics preserve the simplex ($\sum_i p^k_i = 1$), 
fixed points correspond to Nash equilibria, 
and asymptotically stable equilibria are Evolutionary Stable Strategies (ESS).

The WRGN Harvesting Game:
We can interpret the WRGN dynamics as a spatial competition for resources. 
Consider an infinite population of agents distributed across settlement sites with intrinsic yields $v_i = \sqrt{w_i}$. 
Let $p^k_i$ denote the population density at site $i$ at round $k$. 
Agents at site $i$ harvest resources from their local site and 
-- controlled by the interference coefficient $\gamma$ -- from adjacent sites.
The effective demand on site $i$, defined as $D^k_i = (\CLO{A}^{\gamma}(p^k \oslash v))_i$, 
represents the aggregate harvesting pressure from the closed neighborhood. 
The individual payoff (fitness) is the ratio of a site's yield to its effective demand: 
$f_i = v_i / D^k_i$. 
As agents migrate toward higher-yield locations following the Replicator Equation, 
the population's average fitness (average harvest) increases monotonically.
When $\gamma > 1$, the competitive cost of adjacency outweighs the cost of local congestion. 
In this regime, the population converges to an equilibrium where 
it occupies a set of non-adjacent sites -- an Independent Set -- 
while trying to maximize the total resource extraction $\sum w_i$, 
hence approximating the MWIS problem through natural selection.

While replicator dynamics originated in biological contexts to describe the evolution of phenotypes, 
they have since been extensively adopted in evolutionary economics to model the competition 
between firms and technologies \cite{metcalfe1994, friedman1991}. 
In an economical context, the WRGN Harvesting game can be reformulated 
as a model of capital rebalancing under structural externalities; 
for $\gamma > 1$, the dynamics act as a price discovery mechanism 
where the stable portfolio consists exclusively of non-correlated assets that maximize total yield.

Most discrete replicators rely on \emph{potential} games, 
where the fitness is defined as the gradient of a scalar potential
which increases monotonically according to Fisher's Fundamental Theorem of Natural Selection, 
and guarantees convergence. 
In contrast, for non-potential systems, 
discrete dynamics typically manifest complex behaviors 
such as chaos or limit cycles \cite{sato2003coupled}. 
WRGN is thus a rare class of globally convergent nonlinear, non-potential dynamics 
that are globally convergent and obey Fisher's principle of natural selection.

Bomze pioneered the application of \emph{linear} replicators to combinatorial optimization 
by formulating the Maximum Weight Clique (MWC) problem 
as an evolutionary game \cite{bomze1997, bomze2000, bomze2002}. 
Bomze leveraged the Motzkin-Straus theorem \cite{motzkin1965maxima}, 
which establishes a correspondence between the clique number $\omega(A)$ of a graph $A$  
and the global maximum of the quadratic form $p^TAp$ on the simplex.
Linear replicators with fitness $f(p)=Ap$ are a potential game 
on the quadratic potential $\frac{1}{2}p^TAp$, which is also the average fitness.
They thus monotonically increase the Motzkin-Straus quadratic form 
and converge to a local maximum.
However, the potential often exhibits fractional "spurious" local maxima. 
To address this, Bomze \cite{bomze1997} introduced a regularization of the adjacency matrix $A$ 
by incorporating a self-loop term: $\hat{A} = A + \frac{1}{2}I$. 
This modification ensures that the 
the asymptotically stable fixed points of the dynamics 
correspond exactly to the maximal cliques of the graph, 
thereby eliminating the fractional, 
non-clique local maxima of the original Motzkin-Straus formulation.
We drew direct inspiration from Bomze in our regularized formulation of GN.

Since an IS in $G$ is exactly a Clique in the complement graph $\overline{G}$, 
WRGN non-linear replicators and Bomze's linear replicators 
address the same combinatorial core through dual structural lenses.
However, while linear replicators correspond to first-order Mirror Descent on a potential, 
the reciprocal fitness in WRGN incorporates the local curvature of conflict, 
resulting in a nonlinear second-order-like optimization path. 
A comparison between WRGN and Bomze’s linear formulation for the \emph{unweighted case}
is provided in Table~\ref{tab:bomze-vs-wrgn}. 
This analysis reveals a fundamental complementarity: 
where the nonlinear WRGN game utilizes \emph{competition} to minimize structural conflict on the original graph, 
the linear clique game uses \emph{collaboration} to maximize mutual support on the complement graph. 
Remark the reciprocal symmetry between the two formulations of the fitness.

\begin{table}[ht]
\centering
\caption{Comparison of Replicator Dynamics for MaxMIS / MaxClique (unweighted).}
\label{tab:bomze-vs-wrgn}
\begin{tabular}{l l l}
\hline
\textbf{Feature} & \textbf{WRGN MaxIS Game} & \textbf{Bomze's MaxClique Game} \\ \hline
Optimization Target & Maximum Independent Set in $G$ & Maximum Clique in $\overline{G}$ \\
Dynamics Class & Nonlinear Replicator (2nd Order-like) & Linear Replicator (1st Order) \\
Fitness $f_i(p)$ & $1/(Bp)_i$ (Reciprocal) & $(\overline{B}p)_i$ (Linear) \\
    & with $B = I + \gamma A$ & with $\overline{B} = \frac{1}{\gamma}I + A$  \\
Potential & None (Non-potential) & Average Fitness $p^T \overline{B}^\gamma p$ \\
Other Lyapunov & Mass $\1^T y$ = Average Fitness (grows) & None \\
               & Energy $\frac{1}{2}y^T B^\gamma y - \1^T y$ (decreases) & - \\
Mechanism & Competition & Collaboration \\
\hline
\end{tabular}
\end{table}

For the weighted case (MWIS/MWC), Bomze leveraged Gibbons et al. 
weighted extension of Motzkin-Straus \cite{gibbons1997continuous}, 
who characterized the MWIS as the global \emph{minima}
on the simplex of a quadratic form $p^T G_w p$ with a particular matrix $G_w$ which incorporates the weights.
In order to fit the linear replicators \emph{maximization} framework,
Bomze uses the matrix $G'_w = \alpha J - G_w$, where $J$ is the all-ones matrix and 
$\alpha > 0$ a large enough constant to ensure that $G'_w$ is positive.
This introduces a meta-parameter $\alpha$ and breaks the natural scale invariance 
of the MWIS problem. Furthermore, in Gibbons et al.'s matrix $G_w$ 
the regularization and the weighting part are entangled.
Instead WRGN respects MWIS scale invariance and totally decouples $\gamma$ and $w$,
leading to a new weighted extension of the Motzkin-Strauss theorem:

\begin{theorem}[Weight-Tilted Simplex Motzkin-Straus]
\label{thm:motzkin-straus}
Let $G=(A, w)$ be a weighted simple undirected graph with $w_i > 0$.
Let $\Delta^w_{n-1} := \{x \in \mathbb{R}^n_+ \mid \sum \sqrt{w_i} x_i = 1\}$ be the \textbf{Weight-Tilted Simplex}. 
For any $\gamma > 1$, the local minima of the quadratic form $\mathcal{Q}(x) = x^T (I + \gamma A) x$ 
on $\Delta^w_{n-1}$ are in 1:1 correspondence with the $\gamma$-Stable Maximal Independent Sets of $G$.
The value of the $\mathcal{Q}$ at such a minimum corresponding to a MIS $M$ is $1 / \sum_{i \in M} w_i$. 
In particular, any Maximum Weight Independent Set of $G$ is $\gamma$-Stable and 
corresponds to a global minimum of $\mathcal{Q}$ on $\Delta^w_{n-1}$.
\end{theorem}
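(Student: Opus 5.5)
The plan is to reduce the constrained problem to the energy $\TENv$ of Theorem~\ref{thm:strict-descent}, and then use the fixed-point and stability analysis of Theorem~\ref{thm:binary-regularization}. Write $B := I+\gamma A$ and $v=\sqrt{w}$.

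\textbf{Step 1 (reduction to the energy).} For $y\in\Rpn$ with $v^Ty=1$ and $t>0$, we have $\TENv(ty) = \tfrac12 t^2\mathcal{Q}(y) - t$. Minimizing over $t$ gives $t^*=1/\mathcal{Q}(y)$ and the value $-1/(2\mathcal{Q}(y))$. Hence the map $y\mapsto y/\mathcal{Q}(y)$ sends local minima of $\mathcal{Q}$ on $\Delta^w_{n-1}$ bijectively to local minima of $\TENv$ on $\Rpn$, and the converse map is $y\mapsto y/(v^Ty)$. One has to check that "local" transfers both ways; this is routine, since the radial direction is a strictly convex one-dimensional problem.

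\textbf{Step 2 (KKT points are fixed points).} Apply the KKT conditions for $\min \mathcal{Q}$ on the tilted simplex, with multiplier $\lambda$:
\[
(By)_i=\tfrac{\lambda}{2}v_i \ \text{ for } i\in\supp(y), \qquad (By)_i\ge \tfrac{\lambda}{2}v_i \ \text{ otherwise.}
\]
After rescaling, the equalities say exactly that $(By)_i=v_i$ on the support, which is the WRGN fixed-point equation in the weighted state space.

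\textbf{Step 3 (binary case, direct computation).} For a MIS $M$, the fixed point is $y=c\,\1_M\odot v$ with $c=1/\sum_{i\in M}w_i$, and then $\mathcal{Q}(y)=c^2\sum_M w_i = 1/\sum_M w_i$.
\begin{itemize}
\item For $i\notin M$ we get $(By)_i=\gamma c\sum_{j\in N(i)\cap M}v_j$. The KKT inequality $\ge c\,v_i$ is exactly $\stab_{\gamma,w}(M)\ge 1$.
\item If the inequality is strict ($\stab_{\gamma,w}(M)>1$), every direction leaving the face of $M$ increases $\mathcal{Q}$ at first order.
\item Inside the face, $B$ restricted to $M$ is $I$, because $M$ is independent, so the restricted problem is strictly convex. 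Together these give a strict local minimum.
\item If $\stab_{\gamma,w}(M)<1$, first-order descent is available, so $M$ gives no local minimum. The borderline case $\stab_{\gamma,w}(M)=1$ I would handle by a second-order check along the offending coordinate, or by matching whatever convention the paper adopts there.
\end{itemize}
Independence alone is not enough: if $M$ is independent but not maximal, some vertex $i$ has $(By)_i=0<\tfrac{\lambda}{2}v_i$, which violates KKT.

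\textbf{Step 4 (excluding non-binary minima, the main obstacle).} I must show that no KKT point whose support contains an edge is a local minimum. The naive second-order test along $d=v_j e_i - v_i e_j$ for an edge $ij$ gives $w_i+w_j-2\gamma v_iv_j$, which need not be negative when the weights are unbalanced. I would therefore not argue purely by hand. Instead I would invoke Fractional Repulsivity from Theorem~\ref{thm:binary-regularization} together with the strict descent of $\TENv$ from Theorem~\ref{thm:strict-descent}, and argue by contradiction:
\begin{itemize}
\item Suppose $y^*$ is a local minimum of $\TENv$. Take a small sublevel-set neighborhood of the connected component of minimizers containing it.
\item By strict energy descent and continuity of the map, this neighborhood is forward invariant under WRGN.
\item By Theorem~\ref{thm:convergence}, nearby orbits converge to a fixed point inside it, so $y^*$ (or its component) is Lyapunov stable.
\item This contradicts strict repulsivity of a non-binary fixed point.
\end{itemize}
The delicate part is making forward invariance rigorous when minimizers form manifolds. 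If this fails, my fallback is to find a better test direction: combine mass transfer along an edge with the KKT slack and pick weights proportional to $v$ on a clique within the support, using $\gamma>1$.

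\textbf{Step 5 (global minimum).} The value formula from Step 3 shows that the global minimum of $\mathcal{Q}$ on $\Delta^w_{n-1}$ is attained at a MIS of maximum weight. Such an MIS is $\gamma$-stable by the Optimality item of Theorem~\ref{thm:binary-regularization}, which closes the correspondence.
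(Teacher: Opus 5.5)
\textbf{Verdict: genuine gap in Step 4.} Steps 1--3 and 5 are sound. Step 1 is the paper's own radial reduction ($\TENv=\tfrac{M_v^2}{2}\mathcal{Q}_v(p)-M_v$, optimal mass $1/\mathcal{Q}_v$). Your KKT treatment of binary points in Step 3 is more self-contained than the paper. The paper only classifies binary fixed points through the WRGN Jacobian, and then \emph{asserts} that the local minima of $\TENv$ are exactly the asymptotically stable fixed points of the map.

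The gap is Step 4, which is the heart of the theorem. Fractional Repulsivity in Theorem~\ref{thm:binary-regularization} only provides a real Jacobian eigenvalue larger than $1$ at a non-binary fixed point. Your sublevel-set argument gives at most Lyapunov stability of the \emph{component} of minimizers containing $y^*$. When that component is not a single point (the case you flag), stability of the set does not by itself contradict an expanding direction at each of its points. Even for an isolated minimum you would still need a linearized-instability theorem for $C^1$ maps, which the paper does not supply. Leaning on the dynamics therefore inherits exactly the unproved assertion in the paper's own proof.

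\textbf{Missing idea.} No dynamics is needed. The same spectral fact behind Fractional Repulsivity excludes fractional minima directly, provided you use the KKT equality to kill the radial cross term instead of testing a single edge.

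Let $y\in\Delta^w_{n-1}$ be a local minimum whose support $S$ contains an edge. By your Step 2, $B_{SS}y_S=\mu v_S$ with $\mu=\mathcal{Q}(y)>0$. Since $y_S>0$, every $d$ supported on $S$ with $v_S^Td=0$ is a two-sided feasible direction. Because $d^TBy=\mu\,v_S^Td=0$, we have $\mathcal{Q}(y+\epsilon d)=\mathcal{Q}(y)+\epsilon^2d^TB_{SS}d$, so local minimality forces $d^TB_{SS}d\ge 0$ on $v_S^\perp$. For arbitrary $u\in\mathbb{R}^S$, put $t=v_S^Tu$ and $d=u-ty_S$, so that $v_S^Td=0$. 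Then
\[
u^TB_{SS}u=d^TB_{SS}d+2t\mu\,v_S^Td+t^2\mu\,v_S^Ty_S=d^TB_{SS}d+t^2\mu\ge 0 .
\]
Hence $B_{SS}\succeq 0$. But interlacing with the $2\times 2$ principal submatrix of an edge gives $\lambda_{\min}(I+\gamma A_{SS})\le 1-\gamma<0$, a contradiction.

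So the right test direction is the $v_S^\perp$-part (split along $y_S$) of a negative eigenvector of $B_{SS}$, not $v_je_i-v_ie_j$. With this replacing Step 4, your argument becomes a complete variational proof. It needs neither Theorem~\ref{thm:strict-descent} nor Theorem~\ref{thm:convergence}, and Step 1 becomes optional. Only the Optimality item of Theorem~\ref{thm:binary-regularization} is still used, in Step 5.

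\textbf{Borderline case.} Your proposed second-order check settles $\stab_{\gamma,w}(M)=1$, which the strict inequality in the statement requires you to exclude. Let $i$ attain equality, and set $P=N(i)\cap M$ and $W=\sum_{j\in P}w_j$. Take $d=e_i-\frac{v_i}{W}\sum_{j\in P}v_je_j$. Then $v^Td=0$ and $d^TBy=0$. Moreover $d^TBd=1-w_i/W<0$, because $W\le(\sum_{j\in P}v_j)^2=w_i/\gamma^2<w_i$. Hence such an $M$ does not give a local minimum.
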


The proof is provided in Appendix \ref{app:proof-motzkin-straus}.
Note that this theorem characterizes the minima of a quadratic form 
independently of any specific optimization algorithm, such as WRGN. 
The elegance of this result lies in its geometric interpretation: 
the weight vector manifests as a deformation of the simplex, 
effectively shifting the problem's weights from the objective function ($\mathcal{Q}$) 
to the geometry of the constraint manifold ($\Delta^w_{n-1}$). 
While $\Delta_{n-1}$ is a hyperplane with normal vector $\1$, 
$\Delta^w_{n-1}$ is a hyperplane with normal vector $\sqrt{w}$. 
Geometrically, weighting the graph nodes simply manifests 
as \emph{tilting} the constraint hyperplane, 
which reconfigures the landscape of local minima to favor directions aligned with heavier weights.
On its side, the regularization factor $\gamma$ filters out the weakest MISes 
while preserving the global MWIS\footnote{Note that both Motkzin-Straus and Gibbons et al. 
theorems only characterize the global optima. Our theorem provides a richer characterization 
of all local minima.}.
It is easy to prove that for uniform weights, all MISes are $\gamma$-Stable for any $\gamma>1$.
$\Delta^w_{n-1}$ then aligns with the simplex $\Delta_{n-1}$,
the MISes and the minima of $Q$ are in 1:1 mapping and 
the global optima are the maximum \emph{size} MISes, which recovers 
Bomze's result in \cite{bomze1997}.
For the weighted case, for any graph and weights, there is a $\gamma_0$ such that 
for $\gamma > \gamma_0$ \emph{all} MISes are $\gamma$-Stable hence there 
is a 1:1 correspondence between MISes and local minima, together with alignement of 
the depth of the local minima with the weight of the MISes.

\section{Experimental Results}
\label{sec:experiments}

GN can be leveraged as an MWIS solver using either random initializations 
or as a fast, accurate binarization engine for methods that excel at providing fractional solutions. 
The state-of-the-art method in this domain is the Bregman-Sinkhorn (BS) MWIS solver by Haller and Savchynskyy \cite{haller2024}. 
The authors solve the clique-cover LP relaxation of the MWIS problem 
via entropy-regularized dual coordinate descent, 
which utilizes a Bregman-Sinkhorn projection on the clique constraints at its core. 
While BS is highly efficient at reaching a low \emph{relaxed} duality gap
-- frequently in under 1s for graphs with tens of thousands of nodes -- 
it relies on randomized greedy heuristics and optimal recombination to round 
fractional solutions into feasible integer sets 
(i.e., actual 'hard' solutions to the discrete combinatorial problem). 
In practice, reaching low integral gaps takes orders of magnitude longer 
than reaching low fractional gaps. 
We demonstrate that GN can be efficiently employed to bridge this fractional-to-integral gap.

\textbf{Tracking the global minimum by $\gamma$-pursuit.}
As discussed previously, in the limit $\gamma \to 0^+$, 
the GN energy is convex and possesses a unique fractional global minimum. 
As $\gamma$ increases, the energy landscape deforms until the most stable MIS becomes $\gamma$-stable. 
This eventually leads to a regime where only MISes are stable, which is guaranteed for $\gamma > 1$. 
We refer to this graduated non-convexity tracking algorithm as \emph{$\gamma$-pursuit}. 
A PyTorch implementation of a Graph Normalization layer incorporating this tracking logic 
is provided in Appendix~\ref{sec:appendix-code}. 
In all experiments, we utilized $1,000$ GN iterations 
with a linear increase of $\gamma$ from $\gamma_0=0.9$ to $\gamma_1=1.5$. 
These parameters ensured convergence to a valid MIS (after rounding) for all tested graphs, 
demonstrating robust and rapid convergence even for the largest instances in the dataset.

\textbf{Randomized start and warm start.}
We compared two distinct initialization strategies of WRGN sequences: 
random sampling on the simplex and warm-starting from fractional solutions computed via BS. 
For the warm-start configuration, we modified the BS source code provided at 
\url{https://github.com/vislearn/libmpopt} \cite{haller2024} 
to execute the BS fractional optimization scheme in isolation, 
bypassing the default heuristic binarization and rounding phases. 
We ran this modified solver until the \emph{fractional} duality gap fell below $0.1\%$, 
a threshold typically reached in under one second for graphs with $10,000$ nodes. 
At this stage, the \emph{integral} gap often remains near $100\%$, 
and reducing it below $1\%$ requires several minutes of additional computation.
To ensure diversity within our warm-start batches, 
we generated a variety of fractional solutions by 
randomizing the clique optimization order 
and jittering initial temperatures uniformaly in the interval $[100, 110]$.
The temperature drop factor was set to $0.99$.

\textbf{Datasets. }
We benchmarked these approaches on the $59$ large-scale real-world MWIS instances provided by \cite{haller2024}. 
This dataset includes the Amazon Vehicle Routing (AVR) benchmark, 
where nodes represent potential routes and edges represent conflicts between them, 
and the Meta-Segmentation for Cell Detection (MSCD) dataset, 
in which MWIS is used to select non-overlapping cell segmentations. 
These datasets are particularly challenging as they scale up to 882K nodes and 344M edges 
(see Table \ref{tab:results-summary}),
and contain the largest real-world MWIS instances publicly available.

\textbf{Hardware.}
All experiments were conducted on an Apple M4 Pro CPU with 64GB of unified memory. 
We hit memory limitations though, both with the \texttt{libmpopt} Bregman-Sinkhorn code and our WRGN code.
BS memory usage scales linearly with the number of cliques in the clique cover 
and could not run on 4/59 graphs with the largest number of cliques.
WRGN memory usage scale linearly with the number of edges in the graph,
and although we could run on all graphs (except the 4 for which BS did not run), 
memory limitations triggered page swaps for graphs exceeding 10M edges. 
For these instances, we limited testing to a single warm start instead of $16$ runs 
for other graphs; consequently, these runtimes do not reflect the 
intrinsic efficiency of the WRGN operator.

\textbf{Results. }
Table \ref{tab:results-summary} summarizes the performance across 55/59 instances. 
The full results are reported to appendix \ref{app:exp-results}.
$\texttt{BS Time}$ is the time of a BS run as reported in \cite{haller2024}; $\texttt{GN Time}$ is the time of a GN run.
$\texttt{Gap}$ denotes the relative difference between the GN solutions and the solutions reported in \cite{haller2024}, 
which were obtained with 1h runs of BS, and constitute the best known solutions for each problem to date, 
beating multiple competing algorithms including leading commercial software such as Gurobi.
$\texttt{E[Gap]}$ and $\texttt{Best Gap}$ are resp. the average and the minimum gap found by GN 
for each type of start (random or warm).

\begin{table}[h]
\centering
\caption{Summary of MWIS Performance on Representative Instances.}
\label{tab:results-summary}
\begin{tabular}{@{}lcccccc@{}}
\hline
\textbf{Instances} & \textbf{Edges} & \textbf{BS Time} & \textbf{GN Time} & 
\multicolumn{2}{c}{\textbf{Best Gap}} \\ 
& & & & \textbf{Random Start} & \textbf{Warm Start} \\ \hline
AVR\_024 / 034 & 3K / 23K & $<1$s & $0.4$s / $0.1$s & $3.10$\% / $22.7$\% & $0.00$\% / $0.00$\% \\
AVR\_023 / 027 & 127K / 2K & $<1$s & $1.4$s / $0.2$s & $21.09$\% / $1.74$\% & $0.03$\% / $0.06$\% \\
MSCD\_* & 67K-198K & $<1$s & $0.9-1.9$s & $0.77-5.79$\% & $0.41-1.34$\% \\
AVR\_026 / 028 & 604K / 516K & $<1$s & $5$s / $4.6$s & $4.71$\% / $3.32$\% & $0.39$\% / $0.54$\% \\
AVR\_036 & 1.5M & $18$s & $12.7$s & $25.34$\% & $23.09$\% \\
AVR\_033 & 4M & $<1$s & $27$s & 10.33\% & $2.76$\% \\
AVR\_009 & 43M & $10$s & $801$s$^*$ & - & $5.70$\% \\
AVR\_002 & 344M & $297$s & $3751$s$^*$ & - & $8.93$\% \\
\hline
\end{tabular}
\end{table}

Those results yield several key insights:
Warm-starting from a BS fractional solution provides a significantly better initial basin. 
While random starts frequently result in gaps exceeding $5-10\%$, warm-starting brings the solution within $1\%$ of the best-known 
integer solution for almost all instances under $1M$ edges.
In particular, GN finds the best known solution on AVR\_024 and AVR\_034, 
and solutions below $0.1\%$ gap on AVR\_023 and AVR\_027, in less than $1$s. 
Some instance, such as AVR\_036, are particularly hard, both for BS ad GN, which is apparent in the time taken by BS to 
reach a fractional gap of $0.1$\% and by the poor quality of the final solution found by GN, even compared to larger graphs.

\section{Conclusion}
We have introduced Graph Normalization (GN), 
a principled dynamical system that provides a fast, 
differentiable engine for the Maximum Weight Independent Set problem. 
By establishing an equivalence between GN and nonlinear Replicator Dynamics, 
we have provided a rigorous game-theoretic foundation for binarization 
that avoids the sensitive hyperparameter tuning required by deterministic annealing. 
Our theoretical contributions -- including the Weight-Tilted Simplex Motzkin-Straus theorem -- 
characterize the landscape of the MWIS problem as a geometric deformation of the constraint manifold, 
where weights manifest as the "tilt" of a hyperplane and binarization emerges as a natural topological phase transition.
Empirically, GN serves as a high-speed binarization bridge for state-of-the-art relaxed solvers, 
reaching within a 1\% gap of best-known solutions for production-scale graphs in seconds. 
Beyond combinatorial optimization, 
the differentiability of the GN layer allows for its seamless integration into deep learning pipelines. 
This opens new avenues for architectures that require "hard" decisions under structural constraints, 
such as sparse Mixture-of-Experts, dynamic network pruning, and end-to-end learnable structured attention. 
Future work will investigate the extension of GN to hypergraph constraints 
and the potential for a continuous-time formulation of the dynamics 
to further improve convergence speed on massive-scale relational data.


\bibliographystyle{plain}
\bibliography{GraphNormalization}
\appendix
\section{Fixed Points of the Unregularized GN Map}
\label{app:fractional-atoms}

A fixed point $x$ of the GN map defined by Equation~\ref{eqn:gn-map} 
is a normalizable vector $x\in\NA$ which verifies componentwise:
\begin{equation}
\label{eqn:fixed-point}
x_i = x_i / ((A+I)x)_i.    
\end{equation}

Hence $x_i$ is fixed if and only if either $x_i = 0$ or $((A+I)x)_i = 1$.

Let $x\in\NA$ be a fixed point and 
$C_1, C_2, \dots C_p$ the connected components of its support $\supp(x)$.
Assume $i\in\supp(x)$ belongs to the component $C_k$. 
As its closed neighborhood sum $((A+I)x)_i = x_i + \sum_{j\in N(i)} x_j = x_i + \sum_{j\in N(i)\cap C_k} x_j$,
the restriction $x_{C_k}$ of $x$ to $C_k$ is a fixed point of the subgraph $A_{C_k}$ induced by $C_k$.
We call a fixed point with \emph{full support} on a \emph{connected graph} an \textbf{Atom} of Graph Normalization (GNA).
Any fixed point thus uniquely decomposes into i) zeros,
and ii) atoms over the connected components of its support.

For example, the following fixed point is made up of 3 atoms highlighted in grey: 
\begin{figure}[H]
\centering
\begin{tikzpicture}[
node distance=2cm, 
every node/.style={draw, circle, fill=white, minimum size=1.2cm}, 
every edge/.style={draw, ->} 
]
\node[shape=circle,draw=black,scale=0.5] (a) at (0,0) {0};
\node[shape=circle,draw=black,scale=0.5,fill=gray!20, thick, font=\bfseries] (b) at (1,0) {1};
\node[shape=circle,draw=black,scale=0.5] (c) at (1,1) {0};
\node[shape=circle,draw=black,scale=0.5,fill=gray!20,thick, font=\bfseries] (d) at (0,1) {1};
\node[shape=circle,draw=black,scale=0.5] (e) at (2,1) {0};
\node[shape=circle,draw=black,scale=0.5,fill=gray!20,thick, font=\bfseries] (f) at (2,2) {1/4};
\node[shape=circle,draw=black,scale=0.5,fill=gray!20,thick, font=\bfseries] (g) at (1,2) {3/4};
\draw (a) -- (b);
\draw (b) -- (c);
\draw (c) -- (d);
\draw (d) -- (a);
\draw (b) -- (e);
\draw (e) -- (f);
\draw[thick] (f) -- (g);
\draw (g) -- (c);
\draw (c) -- (e);
\end{tikzpicture}
\end{figure}

It is not binary (values $1/4$ and $3/4$), 
but it has two positive binary components which correspond to the smallest 
GN atom: the value $1$ over the complete graph $K_1$ with only 1 node and no edge.
The MISes of a graph are precisely the fixed points which decompose into zeros and $K_1$ atoms 
-- which are non adjacent by definition of the decomposition by connectivity.

Atoms constitute the building blocks of the general fixed points of GN.
An atom $x$ over a connected graph $A$ is a solution of:
\begin{equation}
\label{eqn:atom}
(A+I)x = \1. \quad \textrm{s.t.} \quad x>0.
\end{equation}
Hence, all the closed neighborhood sums of an atom 
-- the sum of the value of a node plus the values of its neighbors -- are equal to $1$, i.e., normalized.
Note that for a generic fixed point, with zeros, the closed neighborhood sums at zero nodes 
are not normalized in general. Consider for example $x=(1, 0, 1)$ 
on the path graph 'o-o-o' with adjacency $A=\begin{pmatrix} 0 & 1 & 0 \\ 1 & 0 & 1 \\ 0 & 1 & 0\end{pmatrix}$.
The closed neighborhood sum of the zero node is $2$.
Note that as a fixed point must be normalizable, none of its closed neighborhood sums can be null, 
i.e., any zero must have at least one nonzero neighbor.

For a given connected graph $A$, we call the set $\mathcal{S}(A)$ of the GN atoms on $A$, 
i.e., the solutions of Equation~\ref{eqn:atom}, the \textbf{Atomic Spectrum} of $A$.
Let $B:=A+I$.
The atomic spectrum of a connected graph $A$ can be either:

\textbf{1. A single point (discrete):} $\mathcal{S}(A)=\{x\}$ when $\det(B)\ne 0$ and $x=B^{-1}\1 > 0$.

In that case, because $B^{-1}=\frac{1}{\det(B)}\adj(B)$ where $\adj(B)$ is the adjoint of $B$ and $B$ is binary, 
$\det(B)$ and $\adj(B)$ both have integer values,
hence all values of $B^{-1}\1$ are fractions proportional to $1/|\det(B)|$.
Figure \ref{fig:discrete-atoms} provides some examples.

\begin{figure}[H]
\centering
\caption{Examples of Discrete GN Atoms.}
\label{fig:discrete-atoms}
\setlength{\tabcolsep}{3pt}
\begin{tabular}{@{}cccc@{}}
\includegraphics[align=c, width=0.25\textwidth]{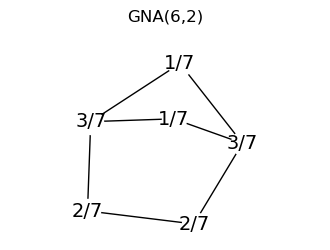} & 
\includegraphics[align=c, width=0.25\textwidth]{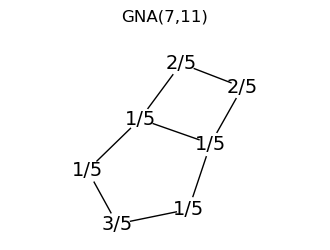} & 
\includegraphics[align=c, width=0.25\textwidth]{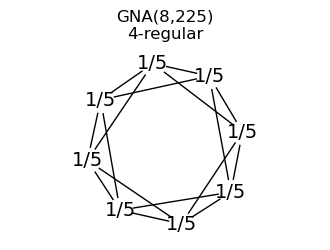} & 
\includegraphics[align=c, width=0.25\textwidth]{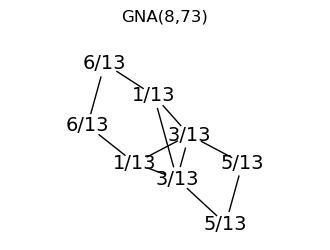} 
\end{tabular}
\end{figure}

\textbf{2. A continuous manifold:} when $\det(B)=0$ and $\exists x_0>0 : Bx_0 = 1$.

In that case $\mathcal{S}(A)=(x_0 + \ker(B)) \cap (0,1)^n$ 
is the intersection of an affine space with the open unit cube.
The manifold of fixed points is connected and has dimension $\dim(\ker(B))$.
Figure~\ref{fig:continuous-atoms} provides some examples. The dimension of the manifold is indicated by \texttt{Dim}.
The values on the nodes contain \texttt{Dim} variables \texttt{a, b, \dots}.
Any value of these variables which generate values in $(0,1)$ for all the nodes 
defines a valid atom.

\begin{figure}[H]
\centering
\caption{Examples of Continuous GN Atoms.}
\label{fig:continuous-atoms}
\setlength{\tabcolsep}{3pt}
\begin{tabular}{@{}cccc@{}}
\includegraphics[align=c, width=0.2\textwidth]{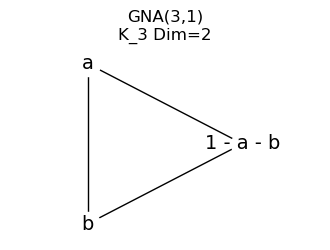} & 
\includegraphics[align=c, width=0.2\textwidth]{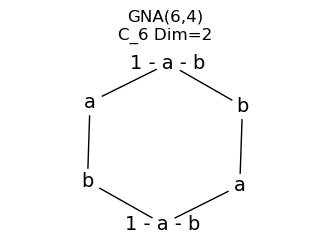} & 
\includegraphics[align=c, width=0.2\textwidth]{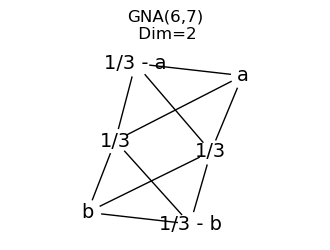} & 
\includegraphics[align=c, width=0.2\textwidth]{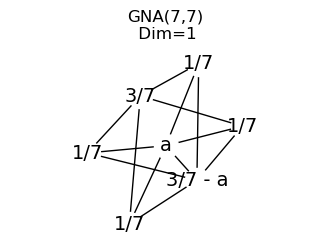} 
\end{tabular}
\end{figure}

\textbf{3. Empty:} $\mathcal{F}(A) = \emptyset$ otherwise.

Most GN atomic spectra are \emph{empty}. 
We say that a graph with a non empty spectrum is \textbf{Atomic}.

\textbf{Regular graphs} are always atomic: 
indeed if $A$ is $d$-regular then the uniform vector $x_0 = \frac{1}{d+1}\1$ 
is always a solution of $Bx_0 = \1$\footnote{One can also prove that regular graphs are the only 
connected graphs which are normalized by a uniform vector.}.
Among regular graphs, some have a discrete spectrum (e.g. the $4$-cycle $C_4$), 
i.e., are only normalized for uniform weights, but others have a continuous spectrum.
The condition is $\det(B)=0$, i.e. $-1$ is an eigenvalue of the adjacency matrix $A$.
There is no trivial condition for when this spectral condition happens in general, however
one can identify some particular \emph{families} of regular graphs with continuous spectra:

\begin{itemize}
\item \textbf{Circulant graphs:} $C_n(S)$ for which $-1$ is an eigenvalue of $A$. This includes:
\begin{itemize}
\item Complete graphs for $n\ge 2$: $\mathcal{S}(K_n)=\{x>0:\sum x_i = 1\}$ is a continuous spectrum of dimension $n-1$.
\item $3n$-cycles: $\mathcal{S}(C_{3n})=\{\underbrace{(a,b,c, \dots, a,b,c)}_{n~\text{times}}): a, b, c \in (0,1); a+b+c=1\}$ is a continuous spectrum of dimension $2$ (note that $C_3 = K_3$).
\item $C_n(1,2)$ for $n=6,10,\dots$
\item $C_n(1,3)$ for $n=8,10,\dots$
\end{itemize}
\item \textbf{Hamming graphs:} $H(d,q) = K_q^{\square d}$ (Cartesian product of $d$ copies of $K_q$, vertex-transitive and $d(q-1)$-regular); eigenvalues are $d(q-1)-qi$ with multiplicity $\binom{d}{i}(q-1)^i$, containing $-1$ for infinitely many $(d,q)$ pairs. In particular any \textbf{Hypercube} $Q_n=K_2^{\square d}$ for $d$ odd (eigenvalue $-1$ has multiplicity $\binom{d}{(d+1)/2}$)
\item \textbf{Generalized Petersen graphs:} (3-regular) e.g., the Desargues graph $GP(10,3)$ for which 
$\spec(A)=\{3,2^4,1^5,0^6,(-1)^4,(-2)^5\}$
\item \textbf{Odd graphs:} $O_k$ ($k$-regular, distance-regular, vertex-transitive) for $k\geq 4$; e.g., $O_4$ for which $\spec(A)=\{4,2^{14},(-1)^{14},(-3)^6\}$
\item \textbf{Cayley graphs:} Various constructions on non-abelian groups (dihedral, symmetric groups) with generating sets giving eigenvalue $-1$
\end{itemize}

We enumerated the $11,716,571$ connected graphs up to $10$ nodes 
-- without isomorphic duplicate and in canonical order -- using \emph{nauty} \cite{mckay1981practical}, 
and tested if they were atomic. 
We classified atomic graphs into four classes: $(\textrm{Irregular}, \textrm{Regular}) \times (\textrm{Discrete}, \textrm{Continuous})$.
The counts for each class are provided in table \ref{tab:atoms_10}.
One sees that the overall density of atomic graphs quickly decreases with $n$ and that 
irregular atomic graphs are much more frequent than regular ones.
Discrete and continuous spectra seem to appear approximately in the same proportions.

\begin{table}[ht!]
    \centering
    \caption{Enumeration of GN Atomic Graphs up to $10$ Nodes.}
    \label{tab:atoms_10}
    \begin{tabular}{|c|c|c|c|c|c|c|c|}
        \hline
        n & Connected & \multicolumn{4}{c|}{Atomic Graphs} & Total & Density \\ 
          & Graphs & \multicolumn{2}{c|}{Irregular} &  \multicolumn{2}{c|}{Regular} &  &  \\ 
          &  & Discrete & Continuous & Discrete & Continuous &  &  \\ \hline
        1 & 1 & 0 & 0 & 1 & 0 & 1 & 100\% \\
        2 & 1 & 0 & 0 & 0 & 1 & 1 & 100\% \\ 
        3 & 2 & 0 & 0 & 0 & 1 & 1 & 50\% \\
        4 & 6 & 0 & 0 & 1 & 1 & 2 & 33\% \\
        5 & 21 & 1 & 1 & 1 & 1 & 4 & 19\% \\ 
        6 & 112 & 2 & 6 & 3 & 2 & 13 & 11.6\% \\
        7 & 853 & 19 & 21 & 2 & 2 & 44 & 5.1\% \\ 
        8 & 11,117 & 96 & 134 & 9 & 8 & 230 & 2.1\% \\ 
        9 & 261,080 & 1,202 & 1,091 & 12 & 10 & 2,315 & 0.89\% \\
        10 & 11,716,571 & 25,799 & 17,782 & 75 & 92 & 43,748 & 0.37\% \\ \hline
    \end{tabular}
\end{table}

\section{Proof of Theorem~\ref{thm:strict-descent} (Strict Energy Descent)}
\label{app:proof-strict-descent}

We work in the weighted state space $y = v\odot x$. 
Define $B_v := I + \gamma \diag(v)^{-1}A\diag(v)$ and $B:=I+\gamma A$. 
One verifies by substitution that the WRGN update $x^{k+1}_i = x^k_i / (B_v x^k)_i$
provides the update for the state $y$: $y^{k+1}_i = v_i y_i^k / (B y^k)_i$
and that the energy $\ENv(x) =  \frac{1}{2} x^T B_v x - \sum v_i^2 x_i $ 
is equal to the energy in $y$:
\[
\TENv(y) := \frac{1}{2} y^T B y - \sum v_i y_i
\]

Following the Majorization-Minimization (MM) framework \cite{lee2000algorithms}, 
we define an auxiliary function $G(y, y^k)$ 
that serves as a global upper bound for $\TENv(y)$:
\[
G(y, y^k) := \frac{1}{2} \sum_{i} \frac{(B y^k)_i}{y_i^k} y_i^2 - \sum_{i} v_i y_i 
\]

This surrogate function satisfies the two necessary MM conditions:

\begin{enumerate}
\item Equality at the Tangent Point: By substitution, $G(y^k, y^k) = \frac{1}{2} (y^k)^T B y^k - v^T y^k = \TENv(y^k)$.
\item Global Dominance: Because $B$ is a \emph{symmetric} non-negative matrix, 
the diagonal majorization lemma \cite{lee2000algorithms} ensures 
$y^T B y \le \sum_{i} \frac{(By^k)_i}{y_i^k} y_i^2$ thus $\TENv(y) \le G(y, y^k)$ for all $y \ge 0$.
\end{enumerate}

The next iterate $y^{k+1}$ is obtained by minimizing the majorant.
Since $G$ is a separable quadratic function with a diagonal Hessian 
$\nabla^2_{yy} G = \text{diag}\bigl( \frac{(B y^k)_i}{y^k_i} \bigr)$, 
it is strictly convex provided $y^k > 0$, 
which can always be assumed as zeros are fixed and can be ignored in the dynamics.

Setting the gradient $\nabla_y G = 0$ yields the unique global minimizer:
\[
y_i^{k+1} = y_i^k \frac{v_i}{(B y^k)_i}
\]

We thus recover the WRGN update rule in weighted state space, 
which is equivalent to the update rule in unweighted state space and 
shows that WRGN is an \emph{exact} MM algorithm.

By the fundamental properties of MM algorithms:
\[ \TENv(y^{k+1}) \le G(y^{k+1}, y^k) \le G(y^k, y^k) = \TENv(y^k) \]

The first inequality is given by global dominance. 
The second inequality is strict ($G(y^{k+1}, y^k) < G(y^k, y^k)$) 
iff $y^{k+1} \neq y^k$ due to the strict convexity of the surrogate.

\section{WRGN Preconditioned Gradient Descent Form}
\label{app:proof-precond}
The gradient of the energy $\TENv$ is:
\[
\nabla \TENv(y^k) = \CLO{A}^\gamma y^k - v.
\]

The WRGN update rule for the state $y$ is:
\[
y_i^{k+1} = \frac{v_i y_i^k}{(\CLO{A}^\gamma y^k)_i}.
\]

Substituting $(\CLO{A}^\gamma y^k)_i = v_i + \nabla \TENv(y^k)_i$:
\[
y_i^{k+1} (v_i + \nabla \TENv(y^k)_i) = v_i y_i^k.
\]

Expanding and isolating the step $y_i^{k+1} - y_i^k$:
\[
y_i^{k+1} - y_i^k = -\frac{y_i^{k+1}}{v_i} \nabla \TENv(y^k)_i.
\]

In vector form, this is the preconditioned gradient descent update:
\[
y^{k+1} - y^k = -\text{diag}(y^{k+1} \oslash v) \nabla \TENv(y^k).
\]

\section{WRGN Cannot Accumulate on Non-Normalizable Points}
\label{app:proof-accu}

\begin{proposition}
\label{prop:no_accu_on_non_normalizable}
For any graph $A$, $v\in\Rppn$, $\gamma > 0$, and $x^0 \in \NA$, 
the WRGN sequence $\{x^k\}_{k \geq 0}$ cannot accumulate on a non-normalizable point. 

Consequently, there exists some $\delta > 0$ such that for sufficiently large $k$, 
the sequence is confined to the compact set:
\[K_\delta := \{x \in [0, 1]^n \mid d(x, \partial \NA) \ge \delta\}.\]
\end{proposition}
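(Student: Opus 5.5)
We work in the weighted state $y^k = v\odot x^k$, where the update reads $y^{k+1}_i = v_i y^k_i/(By^k)_i$ with $B=I+\gamma A$. A point is non-normalizable exactly when $(By)_i=0$ for some $i$, i.e.\ $y$ vanishes on the closed neighborhood of $i$. We therefore aim for a uniform lower bound $\liminf_k (By^k)_i>0$ for every node $i$. Boundedness of the iterates in $[0,1]^n$ and compactness then give the confinement to $K_\delta$.

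The key tool is the weighted mass growth of Theorem~\ref{thm:mass-increase}, combined with a local lower bound. Fix a node $i$ and write $s_i^k := (By^k)_i = y_i^k + \gamma\sum_{j\in N(i)} y^k_j$.

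\textbf{Step 1 (small neighborhood sums force growth).} If $s_i^k < v_i$, then $y_i^{k+1} = y_i^k\, v_i/s_i^k > y_i^k$, so node $i$ grows multiplicatively. Similarly, each neighbor $j\in N(i)$ satisfies $(By^k)_j \ge \gamma y_i^k$, but its value is otherwise controlled by its own neighborhood. A cleaner route is to look at the value of $i$ directly. The update gives $y_i^{k+1} = v_i y_i^k/(y_i^k+\gamma\sum_{N(i)}y_j^k)$. Whenever the neighbors are small compared with $y_i^k$, this is close to $v_i$. Whenever they are not small, $s_i^k\ge \gamma\sum_{N(i)}y_j^k$ is bounded below by those neighbors.

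\textbf{Step 2 (a per-step lower bound).} We claim $s_i^{k+1}\ge c_i>0$ for all $k\ge 1$, for some constant $c_i$ independent of $k$. We split on the ratio $t := \gamma\sum_{N(i)}y_j^k / y_i^k$.
\begin{itemize}
\item If $t\le 1$, then $y_i^{k+1}\ge v_i/2$, hence $s_i^{k+1}\ge v_i/2$.
\item If $t>1$, the neighborhood carries more mass than $i$. Pick the neighbor $j\in N(i)$ with the largest $y_j^k$ and apply the same dichotomy at $j$: either $y_j^{k+1}$ is of order $v_j$, or $j$ in turn has a heavier neighbor.
\end{itemize}
Following this chain of heavier neighbors raises the difficulty that it can leave $N(i)$. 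For that reason we will instead use a global argument.

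\textbf{Step 3 (global argument via mass and energy).} By Theorem~\ref{thm:mass-increase}, $M_v(y^k)\ge M_v(y^0)=:m_0>0$. By Theorem~\ref{thm:strict-descent}, the energy is nonincreasing, and the increments $y^{k+1}-y^k\to 0$ by standard MM sufficient decrease. We argue by contradiction. Suppose a subsequence $y^{k_l}\to \bar y$ with $(B\bar y)_i=0$. Then $\bar y$ vanishes on $\{i\}\cup N(i)$. Since steps vanish, $y^{k_l+1}\to\bar y$ as well, so $y_i^{k_l+1}/y_i^{k_l} = v_i/(By^{k_l})_i \to 1$. This ratio is $v_i/(By^{k_l})_i$, and $(By^{k_l})_i\to 0$, so the ratio tends to $\infty$. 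This contradicts the fact that the ratio tends to $1$, since $y_i^{k_l}>0$ by support invariance.

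The resulting contradiction is quick, so the \textbf{main obstacle} moves to establishing $\|y^{k+1}-y^k\|\to 0$ rigorously. Summing the MM decrease $G(y^k,y^k)-G(y^{k+1},y^k) = \tfrac12\sum_i \frac{(By^k)_i}{y_i^k}(y_i^{k+1}-y_i^k)^2$ telescopes against the energy, which is bounded below on $[0,1]^n$. This gives $\sum_k\sum_i \frac{(By^k)_i}{y_i^k}(\Delta y_i^k)^2<\infty$. The weight $(By^k)_i/y^k_i\ge 1$ (the diagonal entry of $B$) makes the weight harmless, so $\Delta y^k\to 0$.

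\textbf{Step 4 (compactness).} The map $x\mapsto \min_i (\CLO{A}^{\gamma,v}x)_i$ is continuous. It is positive on the closed set of accumulation points of $\{x^k\}$, which lies inside $[0,1]^n$ by boundedness. Hence it has a positive minimum there, and eventually it exceeds half of that minimum along the sequence. This yields a distance $\delta>0$ to $\partial\NA$ for all large $k$, i.e.\ $x^k\in K_\delta$.
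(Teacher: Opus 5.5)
\textbf{There is a genuine gap in Step~3, and Step~3 carries the whole argument.} From $y^{k_l}\to\bar y$ and $\|y^{k_l+1}-y^{k_l}\|\to 0$ you may conclude $y^{k_l+1}\to\bar y$, i.e.\ $y_i^{k_l+1}\to 0$ and $y_i^{k_l}\to 0$. Since $\bar y_i=0$, this says nothing about the ratio $y_i^{k_l+1}/y_i^{k_l}$. Two sequences can both tend to $0$ with vanishing difference and have any ratio (take $\epsilon_l^2$ and $\epsilon_l$).

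Concretely, $y_i^{k+1}=v_iy_i^k/(y_i^k+\gamma\sum_{j\in N(i)}y_j^k)$ stays small as long as $y_i^k\ll\gamma\sum_{j\in N(i)}y_j^k$, however small $(By^k)_i$ is. So $v_i/(By^{k_l})_i\to\infty$ is fully compatible with vanishing steps, and no contradiction arises. Excluding exactly this scenario is the real content of the proposition: $y_i$ being driven to zero much faster than its neighbors before each approach to $\bar y$. Your argument does not address it.

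The mass bound $M_v(y^k)\ge M_v(y^1)>0$ does not help either. It only rules out accumulation at the origin, where every closed neighborhood vanishes at once. Two parts of your proposal are sound: your derivation of $\|y^{k+1}-y^k\|\to 0$ from the exact MM decrease with weights $(By^k)_i/y_i^k\ge 1$, and Step~4 once Step~3 holds.

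\textbf{How the paper proceeds.} The paper does not argue with a single one-step ratio. It uses the vanishing steps to keep the sequence inside $U=\{x:(\CLO{A}^{\gamma,v}x)_i<1/2\}$ for $L$ \emph{consecutive} iterates. Then $x_i$ at least doubles at each of them, giving $x_i^{k+L}>2^Lx_i^k$, and the paper contradicts boundedness. Accumulating growth over a long block is the ingredient missing from your Step~3.

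Be aware, though, that the closing step of that argument is delicate in the same way. The entry index $k$ must be taken larger than $K_L$, while $x_i^k\to 0$ along the visits to $x^*$. So choosing $L$ with $2^L>1/x_i^k$ needs a lower bound on $x_i^k$ at entry times that is uniform in $L$. Inside the $\epsilon$-ball one only gets $x_i^k<\epsilon\,2^{-L}$, which is not a contradiction by itself. Whichever route you take, you need additional control on how small $y_i$ can be relative to $\sum_{j\in N(i)}y_j$ when the sequence approaches $\bar y$.
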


\begin{proof}
Suppose there exists an accumulation point $x^* \in \partial \NA$. 
By definition of the normalizable domain for the WRGN map, 
there exists at least one node $i$ such that its weighted neighborhood sum is zero: 
$(\WRAM x^*)_i = 0$.
Let $U = \{x \in \NA \mid (\WRAM x)_i < 1/2\}$ be an open set containing $x^*$. 
According to the WRGN update rule, when $x^k \in U$, 
the iterate $x_i^{k+1}$ satisfies:
\[
x_i^{k+1} = \frac{x_i^k}{(\WRAM x^k)_i} > 2x_i^k.
\]

We now show that for any $L \in \mathbb{N}$, 
there exists a contiguous block of $L$ iterates $\{x^k, \dots, x^{k+L}\}$ contained in $U$. 

Since the map $x \mapsto (\WRAM x)_i$ is continuous and $(\WRAM x^*)_i = 0$, 
$U$ is an open neighborhood of $x^*$. 
There exists an $\epsilon > 0$ such that the open ball $\mathcal{B}(x^*, \epsilon) \subset U$.

As established, WRGN is an exact MM algorithm minimizing the energy $\ENv$. 
Given that $\ENv$ is bounded below on the invariant set $[0,1]^n$, 
the vanishing step size $\|x^{k+1} - x^k\|_2 \to 0$ is a classical result of MM theory \cite{hunter2004tutorial}.
Since step sizes vanish, for any $L$, there exists $K_L$ such that for all $k > K_L$, $\|x^{k+1} - x^k\|_2 < \frac{\epsilon}{2L}$. 
Because $x^*$ is an accumulation point, there exist infinitely many indices $k > K_L$ such that $\|x^k - x^*\|_2 < \epsilon/2$. 
For any $m \in \{1, \dots, L\}$, the triangle inequality yields:
\[
\|x^{k+m} - x^*\|_2 \le \|x^k - x^*\|_2 + \sum_{j=0}^{m-1} \|x^{k+j+1} - x^{k+j}\|_2 < \frac{\epsilon}{2} + m\frac{\epsilon}{2L} \le \epsilon
\]
Thus, $x^{k+m} \in \mathcal{B}(x^*, \epsilon) \subset U$ for all $m \in \{0, \dots, L\}$.

During this contiguous block of length $L$,
the growth property holds at every step, yielding $x_i^{k+L} > 2^L x_i^k$. 
Since $x^0$ has full support and the WRGN map is support-invariant, $x_i^k > 0$ for all finite $k$. 
For any $k > K_L$ that is sufficiently close to $x^*$, 
$x_i^k$ is a fixed positive value. 
We can choose an $L$ large enough such that $2^L > 1/x_i^k$, ensuring $x_i^{k+L} > 1$.
This contradicts the fact that the WRGN sequence is bounded within $[0, 1]^n$. 
Therefore, $x^*$ cannot be an accumulation point. 
 
Since the sequence $\{x^k\}$ is bounded in $[0,1]^n$ and cannot accumulate on the boundary $\partial \NA$, 
it must be confined to the compact set $K_\delta$ for some $\delta > 0$.
\end{proof}

\begin{corollary}[Accumulation on Fixed Points]\label{cor:accu-fixed-points}
Any WRGN sequence accumulates on the (normalizable) fixed points of the map.
\end{corollary}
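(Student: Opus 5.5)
The corollary should follow from three facts: the sequence is bounded, its steps vanish, and the WRGN map is continuous on $\NA$. Let $x^*$ be any accumulation point of $\{x^k\}$.

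\textbf{Step 1: the accumulation point is normalizable.} By Proposition~\ref{prop:no_accu_on_non_normalizable}, the iterates eventually lie in the compact set $K_\delta \subset \NA$. Since $\{x^k\}\subset[0,1]^n$ is bounded, accumulation points exist. Any such point is a limit of iterates in $K_\delta$, and $K_\delta$ is closed, so $x^*\in K_\delta\subset\NA$.

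\textbf{Step 2: passing to the limit in the update.} Choose a subsequence $x^{k_j}\to x^*$. The map $\GN{\WRAM}(x)=x\oslash \WRAM x$ is continuous on $\NA$, because its denominators $(\WRAM x)_i$ are strictly positive there. Hence $x^{k_j+1}=\GN{\WRAM}(x^{k_j})\to \GN{\WRAM}(x^*)$. On the other hand, the steps vanish, $\|x^{k+1}-x^k\|\to 0$, a fact already used in the proof of Proposition~\ref{prop:no_accu_on_non_normalizable}. This gives $x^{k_j+1}\to x^*$ as well. By uniqueness of limits, $\GN{\WRAM}(x^*)=x^*$.

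\textbf{The main obstacle} is justifying vanishing steps rigorously, since Theorem~\ref{thm:strict-descent} only gives monotone decrease of $\TENv$. My plan is to get a quantitative decrease from the MM structure, working in $y=v\odot x$:
\[
\TENv(y^k)-\TENv(y^{k+1})\ \ge\ G(y^k,y^k)-G(y^{k+1},y^k)=\tfrac12\sum_i \frac{(By^k)_i}{y^k_i}\,(y^{k+1}_i-y^k_i)^2 .
\]
The equality holds because $G(\cdot,y^k)$ is an exact separable quadratic minimized at $y^{k+1}$.

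To bound the coefficients from below, note that $(By^k)_i\ge y^k_i$, so $(By^k)_i/y^k_i\ge 1$. Each term on the right is therefore at least $\tfrac12 (y^{k+1}_i-y^k_i)^2$.

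Summing over $k$ telescopes the left side. The energy $\TENv$ is bounded below on the bounded invariant set, so $\sum_k\|y^{k+1}-y^k\|^2<\infty$, and the steps vanish in $y$. Since $v>0$, they vanish in $x$ too.

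Finally, the limit argument in Step 2 needs the denominators bounded away from zero along the subsequence. This holds because $x^*\in\NA$ and the convergence of the subsequence keeps $(\WRAM x^{k_j})_i$ close to $(\WRAM x^*)_i>0$, so the limit passage is legitimate.
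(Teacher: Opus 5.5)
Your proof is correct and follows the same route as the paper. Proposition~\ref{prop:no_accu_on_non_normalizable} confines the tail to a compact subset of $\NA$, and then continuity of $\GN{\WRAM}$ on $\NA$ plus vanishing steps forces every accumulation point to be a fixed point. Your one addition is the explicit sufficient-decrease bound $\TENv(y^k)-\TENv(y^{k+1})\ge \tfrac12\|y^{k+1}-y^k\|^2$, obtained from the exact quadratic gap of the separable surrogate together with $(By^k)_i/y^k_i\ge 1$ under the full-support convention; it rigorously proves the vanishing steps that the paper only attributes to general MM theory.
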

\begin{proof}
By Proposition~\ref{prop:no_accu_on_non_normalizable}, 
the sequence is confined to a compact subset of $\NA$ for sufficiently large $k$. 
Therefore, the sequence must have at least one accumulation point $x^*$. 
The graph normalization map $\GN{\WRAM}$ is continuous on $\NA$. 
Let $\{x^{k_j}\}$ be a subsequence converging to $x^*$. 
Since the step sizes vanish ($\|x^{k+1} - x^k\|_2 \to 0$ ), we have:
\[
\GN{\WRAM}(x^*) = \GN{\WRAM}(\lim_{j \to \infty} x^{k_j}) = \lim_{j \to \infty} \GN{\WRAM}(x^{k_j}) = \lim_{j \to \infty} x^{k_j+1}.
\]

Because $\|x^{k_j+1} - x^{k_j}\|_2 \to 0$, it follows that $\lim x^{k_j+1} = \lim x^{k_j} = x^*$. 
Thus, $\GN{\WRAM}(x^*) = x^*$, showing that every accumulation point is a fixed point, which is normalizable by definition.
\end{proof}

\section{Proof of Theorem~\ref{thm:convergence} (Global Convergence)}
\label{app:proof-convergence}

We leverage the following Theorem by Attouch, Bolte and Svaiter \cite[p.\,12]{attouch2013convergence}: 

\textbf{Theorem 2.9} \emph{(Convergence to a critical point). Let $f:\mathbb{R}^n\to \mathbb{R}\cup\{+\infty\}$
be a proper lower semicontinuous function. Consider a sequence $(x^k)_{k\in \mathbb{N}}$ 
that satisfies H1, H2, and H3.
If $f$ has the Kurdyka-\L ojasiewicz property at the cluster point $\tilde{x}$ specified in H3 then the
sequence $(x^k)_{k\in \mathbb{N}}$ converges to $\bar{x} = \tilde{x}$ as $k$ goes to infinity, and $\bar{x}$ is a critical point of $f$.}

where H1, H2 and H3 are defined on p.\,8 in \cite{attouch2013convergence} and reproduced below.

Let $\{y^k\}$ be a WRGN sequence in weighted state space.
Let $\delta > 0$. 
By Proposition~\ref{prop:no_accu_on_non_normalizable}, 
there exists an integer $K$ such that the tail sequence $x=\{y^{k+K}\}_{k \in \mathbb{N}}$ 
is confined to the compact set $K_\delta$. 
We define the extended-value objective on $\mathbb{R}^n$:
\begin{equation}
\label{eqn:extended-objective}
f(z) = \begin{cases} 
    \TENv(z) & z \in K_\delta \\ 
    +\infty & \text{otherwise} \end{cases}.
\end{equation}

$f$ is proper, lower semicontinuous, and, 
being a composition of semi-algebraic terms, 
satisfies the Kurdyka-Łojasiewicz property everywhere. 

We verify that the tail sequence $x$ satisfies conditions H1-H3 of Theorem~2.9 \cite[p.\,12]{attouch2013convergence}:

\textbf{H1.} (Sufficient decrease condition). $a>0$ exists such that for each $k\in\mathbb{N}$,
\[
f(x^{k+1}) + a\|x^{k+1}-x^{k}\|^2 \le f(x^k).
\]

\textbf{Verification:}
By the MM construction, $x^{k+1}$ is the minimizer of a separable quadratic surrogate function $G(x|x^k)$. 
Specifically, for $x \in K_\delta$, the second derivatives of the surrogate components are bounded 
below by a constant $a_{min} > 0$ related to the minimum weighted neighborhood sum. 
By the exact second-order Taylor expansion of $G$ around its minimizer $x^{k+1}$, 
noting that $\nabla G(x^{k+1}|x^k) = 0$ and $G(x^k|x^k) = f(x^k)$, we have:
\[
f(x^k) = G(x^k|x^k) \ge G(x^{k+1}|x^k) + \frac{a_{min}}{2}\|x^{k+1} - x^k\|^2.
\]
Since $f(x^{k+1}) \le G(x^{k+1}|x^k)$ by the majorization property, H1 is satisfied with $a = a_{min}/2$.

\textbf{H2.} (Relative error condition). $b>0$ exists such that for each $k\in\mathbb{N}$, there exists $w^{k+1} \in \partial f(x^{k+1})$ 
such that
\[
\|w^{k+1}\| \le b\|x^{k+1}-x^{k}\|.
\]
[where $\partial f$ is the limiting subdifferential of $f$]

\textbf{Verification:}
Since $x^{k+1}$ is an interior point of $K_\delta$ for sufficiently large $k$ 
we choose $w^{k+1} = \nabla f(x^{k+1}) \in \partial f(x^{k+1})$.
As $x^{k+1}$ minimizes the surrogate $G(x|x^k)$, 
we have $\nabla G(x^{k+1}|x^k) = 0$, and thus:
\[
\|w^{k+1}\| = \|\nabla f(x^{k+1}) - \mathbf{0}\| = \|\nabla f(x^{k+1}) - \nabla G(x^{k+1}|x^k)\|.
\]

Since $\nabla f(x^k) = \nabla G(x^k|x^k)$ by the tangency property of the MM surrogate, 
the difference $\|\nabla f(x^{k+1}) - \nabla G(x^{k+1}|x^k)\|$ is bounded by $b\|x^{k+1} - x^k\|$ 
for some $b>0$ due to the Lipschitz continuity of the gradients on $K_\delta$.
This satisfies H2.

\textbf{H3.} (Continuity condition). There exists a subsequence $(x^{k_j})_{j\in\mathbb{N}}$ and $\tilde{x}$ such that
\[
x^{k_j} \to \tilde{x} \quad \textrm{and} \quad f(x^{k_j}) \to f(\tilde{x}) \quad \textrm{as} \quad j\to \infty.
\]

\textbf{Verification:}
Proposition \ref{cor:accu-fixed-points} shows that $x$ 
has at least one accumulation point $\tilde{x} \in K_\delta$ and a subsequence $x^{k_j} \to \tilde{x}$. 
Since $\TENv$ is a smooth rational function on $K_\delta$, it is continuous at $\tilde{x}$. 
Thus, $f(x^{k_j}) \to f(\tilde{x})$ as $j \to \infty$, satisfying H3.

\textbf{Application of Theorem 2.9}
All conditions H1--H3 are satisfied hence the sequence $\{x^k\}$ converges to $\bar{x} = \tilde{x}$.

\section{Proof of Theorem~\ref{thm:mass-increase} (WRGN Increases the Weighted Mass)}
\label{app:proof-mass-increase}

\begin{lemma}
\label{lem:weighted-conjecture-81}
Let $S$ be a symmetric matrix with $S_{ii} = 1$ and $S_{ij} \ge 0$. Let $v \in \mathbb{R}^n_{++}$ be a vector of positive weights. 
Let $y \in \mathbb{R}^n_{++}$ be the result of a WRGN update step in weighted state space, 
i.e., there exists $z > 0$ such that $y_i = \frac{v_i z_i}{(Sz)_i}$. Then:
\begin{equation}
y^T S y \le \sum_{i=1}^n v_i y_i
\end{equation}
with equality if and only if $y$ is a fixed point of the dynamics, satisfying $(Sy)_i = v_i$ for all $i \in \text{supp}(y)$.
\end{lemma}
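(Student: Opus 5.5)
The plan is a direct algebraic computation that writes both sides of the inequality as quadratic forms in $z$ and compares them term by term. Set $s := Sz$, which satisfies $s > 0$ because $S_{ii}=1$, $S_{ij}\ge 0$ and $z>0$. Introduce the ratios $a_i := v_i / s_i > 0$, so that the update reads $y_i = a_i z_i$. Then
\[
y^T S y = \sum_{i,j} S_{ij}\, a_i a_j\, z_i z_j .
\]
For the right-hand side, I will use the identity $s_i = \sum_j S_{ij} z_j$ to insert a factor equal to one:
\[
\sum_i v_i y_i = \sum_i a_i^2 s_i z_i = \sum_{i,j} S_{ij}\, a_i^2\, z_i z_j .
\]

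Next I symmetrize the right-hand side using $S_{ij}=S_{ji}$, which gives $\sum_i v_i y_i = \tfrac12 \sum_{i,j} S_{ij} z_i z_j (a_i^2 + a_j^2)$. Subtracting the two expressions yields the key identity
\[
\sum_i v_i y_i - y^T S y = \frac12 \sum_{i,j} S_{ij}\, z_i z_j\, (a_i - a_j)^2 \;\ge\; 0,
\]
because every coefficient $S_{ij} z_i z_j$ is non-negative. This proves the inequality.

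For the equality case, note first that $z>0$. Hence equality holds iff $a_i = a_j$ whenever $S_{ij} > 0$, that is, iff $a$ is constant, say equal to $\alpha_C$, on each connected component $C$ of the graph of $S$.
\begin{itemize}
\item In that case $y = \alpha_C z$ on $C$. Since $S$ has no entries linking different components, for $i \in C$ we get $(Sy)_i = \alpha_C (Sz)_i = a_i s_i = v_i$. Thus $Sy = v$ on $\supp(y)$, and the update $y \mapsto v\odot y \oslash Sy$ fixes $y$.
\item Conversely, if $(Sy)_i = v_i$ for all $i$, then $y^TSy = \sum_i y_i (Sy)_i = \sum_i v_i y_i$, so equality holds.
\end{itemize}

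The main subtlety is the equality characterization rather than the inequality itself. Equality does not force $z$ to be a fixed point: it only forces $a$ to be constant on each component. One has to observe that this is nonetheless exactly the condition that makes the image $y$ a fixed point, which relies on the scale invariance of the map on each connected component. The one real idea in the argument is the "insert $s_i/s_i$" step; the rest is bookkeeping.
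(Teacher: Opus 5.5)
Your proof is correct and follows essentially the paper's route: the paper also inserts the identity $(Sz)_i=\sum_j S_{ij}z_j$ into the linear term and symmetrizes over edges. It writes the resulting sum of squares in the variables $y$ and $d=Sz$ as $\sum_{(i,j)\in E}\frac{S_{ij}y_iy_j}{v_iv_jd_id_j}(v_id_j-v_jd_i)^2$, which is exactly your $\frac12\sum_{i,j}S_{ij}z_iz_j(a_i-a_j)^2$; its equality analysis ($v_i/d_i$ constant on components, hence $z\propto y$ and $Sy=v$) matches yours, and you additionally spell out the easy converse direction that the paper leaves implicit.
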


\begin{proof}
Let $d_i = (Sz)_i$. By the definition of the WRGN update, we have $z_i = \frac{y_i d_i}{v_i}$. 
Substituting this into the expansion of $d_i = (Sz)_i$:
\[
d_i = \sum_{j=1}^n S_{ij} z_j = \sum_{j=1}^n S_{ij} \frac{y_j d_j}{v_j} \implies 1 = \sum_{j=1}^n S_{ij} \frac{y_j d_j}{v_j d_i}.
\]

Define the weighted difference $\Delta_v(y) = \sum v_i y_i - y^T S y$. Expanding the quadratic form and using $S_{ii}=1$:
\[
\Delta_v(y) = \sum_i y_i (v_i - y_i) - \sum_{i \neq j} S_{ij} y_i y_j.
\]

Substituting the identity $1 = \sum_j S_{ij} \frac{y_j d_j}{v_j d_i}$ into the $v_i$ term:
\begin{eqnarray*}
\Delta_v(y) 
&=& \sum_i y_i \left( v_i \sum_j S_{ij} \frac{y_j d_j}{v_j d_i} \right) - \sum_i y_i^2 - \sum_{i \neq j} S_{ij} y_i y_j \\
&=& \sum_{i,j} S_{ij} y_i y_j \frac{v_i d_j}{v_j d_i} - \sum_i y_i^2 - \sum_{i \neq j} S_{ij} y_i y_j
\end{eqnarray*}

Using the symmetry of $S$ ($S_{ij} = S_{ji}$), we group the terms for each edge $(i,j) \in E$:
\[
\Delta_v(y) = \sum_{(i,j) \in E} S_{ij} y_i y_j \left( \frac{v_i d_j}{v_j d_i} + \frac{v_j d_i}{v_i d_j} - 2 \right).
\]

Factoring the expression in the parentheses:
\[
\Delta_v(y) = \sum_{(i,j) \in E} \frac{S_{ij} y_i y_j}{v_i v_j d_i d_j} \left( v_i d_j - v_j d_i \right)^2
\]

Since $S_{ij}, y_i, d_i, v_i > 0$, every term in the sum is non-negative, 
hence $\Delta_v(y) \ge 0$, which implies $y^T S y \le \sum v_i y_i$.

\textbf{Equality Case:}

The equality $\Delta_v(y) = 0$ holds if and only if each term in the sum vanishes. 
For all pairs $(i,j)$ with $S_{ij} > 0$, 
this requires $v_i d_j - v_j d_i = 0$, which implies $\frac{d_i}{v_i} = \lambda$ 
for some constant $\lambda > 0$ on each connected component of the support ($\lambda > 0$ because $d_i > 0$).
By the definition of the WRGN update, $z_i = \frac{y_i d_i}{v_i}$. 
Substituting $\frac{d_i}{v_i} = \lambda$ yields $z_i = \lambda y_i$ (or $z = \lambda y$ in vector form). 
Substituting this back into the definition of $d$ gives: $d = Sz = S(\lambda y) = \lambda Sy$. 
Consequently, the condition $d_i = \lambda v_i$ becomes $\lambda (Sy)_i = \lambda v_i$. 
Since $\lambda > 0$, we conclude $(Sy)_i = v_i$ for all $i \in \text{supp}(y)$. 
Hence $y$ is a fixed point of the WRGN dynamics, satisfying the weighted normalization condition.
\end{proof}

\begin{theorem}[Monotonic Growth of Weighted Mass]
\label{thm:weighted-mass-growth}
Let $S$ be a symmetric matrix with $S_{ii} = 1$ and $S_{ij} \ge 0$. 
Let $v \in \mathbb{R}^n_{++}$ be a vector of positive weights. Consider the WRGN sequence $\{y^k\}$ defined by the update rule:
\[
y_i^{k+1} = y_i^k \frac{v_i}{(Sy^k)_i}.
\]
If $y^k \in \mathbb{R}^n_{++}$ is the image of a prior WRGN step, 
then the weighted mass $M_v(y) := \sum_{i=1}^n v_i y_i$ is non-decreasing:
\[
M_v(y^{k+1}) \ge M_v(y^k).
\]
Furthermore, the increase is strict ($M_v(y^{k+1}) > M_v(y^k)$) unless $y^k$ is a fixed point of the dynamics.
\end{theorem}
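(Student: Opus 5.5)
The plan is to combine Lemma~\ref{lem:weighted-conjecture-81} with a Cauchy--Schwarz inequality. Write $y := y^k$ and $d_i := (Sy)_i > 0$, so that $y^{k+1}_i = v_i y_i / d_i$. Because $y^k$ is itself the image of a WRGN step, Lemma~\ref{lem:weighted-conjecture-81} applies to it and gives
\[
y^T S y = \sum_i y_i d_i \le \sum_i v_i y_i = M_v(y^k).
\]
The goal is then to bound $M_v(y^{k+1}) = \sum_i v_i^2 y_i / d_i$ from below using only this inequality.

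The key step is Cauchy--Schwarz with weights $y_i$. Split $v_i y_i = \sqrt{y_i d_i}\cdot \bigl(v_i\sqrt{y_i/d_i}\bigr)$, which gives
\[
M_v(y^k)^2 = \Bigl(\sum_i v_i y_i\Bigr)^2 \le \Bigl(\sum_i y_i d_i\Bigr)\Bigl(\sum_i \frac{v_i^2 y_i}{d_i}\Bigr) = (y^T S y)\, M_v(y^{k+1}).
\]
Substituting the lemma's bound $y^T S y \le M_v(y^k)$ and dividing by $M_v(y^k) > 0$ yields $M_v(y^{k+1}) \ge M_v(y^k)$.

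For strictness, equality in the chain forces equality in both inequalities. Equality in the lemma already holds if and only if $(Sy)_i = v_i$ on the support of $y$. This is exactly the fixed-point condition $y^{k+1} = y^k$. So whenever $y^k$ is not a fixed point, the lemma's inequality is strict. Since Cauchy--Schwarz gives $M_v^2 \le (y^TSy)\,M_v(y^{k+1})$ with $y^TSy < M_v$, we get $M_v(y^{k+1}) > M_v(y^k)$. The Cauchy--Schwarz equality case (where $d_i/v_i$ is constant) is therefore never needed separately.

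The only delicate point is checking the hypothesis of Lemma~\ref{lem:weighted-conjecture-81}: it needs $y^k = v\odot z/(Sz)$ for some $z>0$, and this is exactly why the theorem assumes $y^k$ is a prior WRGN image. For the paper's application, take $S = I+\gamma A$. To cover $k=0$, note that the statement only applies from $k \ge 1$; the full sequence's mass therefore increases from the first iterate onward, which is what Theorem~\ref{thm:mass-increase} needs.
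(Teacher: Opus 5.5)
Your proof is correct and follows essentially the same route as the paper. The paper applies Jensen's inequality to $t\mapsto 1/t$ with weights $\lambda_i = v_i y_i^k/M_v(y^k)$ to get $M_v(y^{k+1}) \ge M_v(y^k)^2/\bigl((y^k)^T S y^k\bigr)$, which is exactly your Cauchy--Schwarz bound, and then invokes Lemma~\ref{lem:weighted-conjecture-81} in the same way. Your strictness argument, which uses only the strict inequality $(y^k)^T S y^k < M_v(y^k)$ away from fixed points, matches the paper's closing remark and avoids its separate, somewhat loosely justified, appeal to the Jensen equality case.
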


\begin{proof}
For the current state $y^k$, we define a probability distribution $\lambda$ such that:
\[
\lambda_i = \frac{v_i y_i^k}{M_v(y^k)}, \quad \text{implying} \quad \sum_{i=1}^n \lambda_i = 1.
\]

The weighted mass at iteration $k+1$ is given by:
\[
M_v(y^{k+1}) = \sum_{i=1}^n v_i y_i^{k+1} = \sum_{i=1}^n v_i \left( y_i^k \frac{v_i}{(Sy^k)_i} \right) = \sum_{i=1}^n \frac{v_i^2 y_i^k}{(Sy^k)_i}.
\]

Factoring out the current mass $M_v(y^k)$, we rewrite the sum in terms of $\lambda_i$:
\[
M_v(y^{k+1}) = M_v(y^k) \sum_{i=1}^n \lambda_i \left( \frac{v_i}{(Sy^k)_i} \right)
\]

Since $f(t) = 1/t$ is a strictly convex function on $(0, \infty)$, we apply Jensen's inequality:
\[
\sum_{i=1}^n \lambda_i \frac{1}{(Sy^k)_i / v_i} \ge \frac{1}{\sum_{i=1}^n \lambda_i \frac{(Sy^k)_i}{v_i}}.
\]

Substituting the definition of $\lambda_i$ into the denominator:
\[
\sum_{i=1}^n \lambda_i \frac{(Sy^k)_i}{v_i} = \sum_{i=1}^n \left( \frac{v_i y_i^k}{M_v(y^k)} \right) \frac{(Sy^k)_i}{v_i} = \frac{1}{M_v(y^k)} \sum_{i=1}^n y_i^k (Sy^k)_i = \frac{(y^k)^T S y^k}{M_v(y^k)}
\]

Combining these results yields the bound:
\[
M_v(y^{k+1}) \ge M_v(y^k) \cdot \frac{M_v(y^k)}{(y^k)^T S y^k}.
\]

By Lemma \ref{lem:weighted-conjecture-81}, we have $M_v(y^k) \ge (y^k)^T S y^k$ for any $y^k$ that is an image of the WRGN operator. 
Therefore $M_v(y^{k+1}) \ge M_v(y^k)$. 

The inequality is strict unless all terms $(Sy^k)_i / v_i$ are equal for 
$i \in \text{supp}(y^k)$. If $(Sy^k)_i / v_i = \lambda$ for some constant $\lambda$, 
then by Lemma \ref{lem:weighted-conjecture-81}, $y^k$ is a fixed point and $\lambda = 1$. 
In all other cases, the strict convexity of $f(t) = 1/t$ ensures $M_v(y^{k+1}) > M_v(y^k)$. 
Furthermore, since $(y^k)^T S y^k < M_v(y^k)$ for all non-fixed points $y^k \in \text{im}(\text{WRGN})$, 
the growth factor $\frac{M_v(y^k)}{(y^k)^T S y^k}$ is strictly greater than unity, reinforcing the monotonic increase.
\end{proof}

\begin{corollary}
When applied to the MWIS problem with $v = \sqrt{w}$, 
WRGN strictly increases the MWIS objective $\sum w_i x_i$ at each iteration.
\end{corollary}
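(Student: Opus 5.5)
The corollary is a change of variables applied to Theorem~\ref{thm:weighted-mass-growth}, which we specialize to $S := \CLO{A}^\gamma = I + \gamma A$. This matrix is symmetric, has unit diagonal, and has non-negative off-diagonal entries, so the theorem's hypotheses on $S$ hold. With $v = \sqrt{w}$ and the weighted state $y^k = v \odot x^k$, Appendix~\ref{app:proof-strict-descent} shows that the WRGN update becomes $y^{k+1}_i = v_i y^k_i/(S y^k)_i$. This is exactly the update rule in Theorem~\ref{thm:weighted-mass-growth}. The weighted mass then reads
\[
M_v(y^k) = \sum_i v_i y^k_i = \sum_i v_i^2 x^k_i = \sum_i w_i x^k_i,
\]
which is precisely the relaxed MWIS objective.

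The only remaining hypothesis is that $y^k$ is the image of a prior WRGN step. This holds for every $k \ge 1$, since $y^k$ is the image of $y^{k-1}$ and is strictly positive once zero nodes are removed by support invariance. Theorem~\ref{thm:weighted-mass-growth} therefore gives $\sum_i w_i x^{k+1}_i > \sum_i w_i x^k_i$ whenever $y^k$ is not a fixed point. Because $y \mapsto x$ is a diagonal rescaling by $v > 0$, $y^k$ is a fixed point exactly when $x^k$ is.

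The main obstacle is the initial step $k = 0$: an arbitrary $x^0$ need not be in the image of the map, so Lemma~\ref{lem:weighted-conjecture-81} does not apply to it. I would state strict increase for all $k \ge 1$, which is the regime where Theorem~\ref{thm:weighted-mass-growth} applies. If strict increase from $k = 0$ is also wanted, I would try to show that the first step cannot decrease the mass. Failing that, I would simply note that ``at each iteration'' is understood after the first normalization, since one GN step lands in the image.
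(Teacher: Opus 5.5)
Your proof is the paper's: rewrite the weighted mass as $M_v(y)=\sum_i v_i^2x_i=\sum_i w_ix_i$ and apply Theorem~\ref{thm:weighted-mass-growth} with $S=I+\gamma A$. Your caveat about $k=0$ is genuine, and your fallback of stating the result for $k\ge 1$ is the right resolution, whereas proving that the first step cannot decrease the mass would fail: since $x^1\in[0,1]^n$, any $x^0$ with $\sum_i w_ix^0_i>\sum_i w_i$ gives a decrease, and even $x^0=\1$ on a graph with an edge does (every non-isolated node gets $x^1_i<1$), so the corollary holds only for $k\ge 1$, a restriction the paper's one-line proof leaves implicit.
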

\begin{proof}
$M_v(y) = \sum_{i=1}^n v_i y_i = \sum_{i=1}^n v_i \times v_i x_i = \sum_{i=1}^n v^2_i x_i =\sum_{i=1}^n w_i x_i$.
\end{proof}

\section{Proof of Theorem~\ref{thm:binary-regularization} (Binarization Regularization)}
\label{app:proof-binary-regularization}

\textbf{Jacobian at Fixed Points.}

\begin{lemma}[Jacobian at Fixed Points of GN]
\label{lem:jacobian}
Given a graph $A$, $v>0$ and $\gamma>0$, we denote $B:=\CLO{A}^{\gamma, v}$ to lighten notations.
Let $x^*$ be a fixed point of the WRGN map $\GN{B}$ and $S := \supp(x^{*})$ the support of $x^*$.
Let $\{C_1, \dots C_p\}$ be the connected components of $S$.
The Jacobian $J(x^{*})$ of $\GN{B}$ at $x^*$ has the form:
\[
J_{ij}(x^{*}) = \frac{\big(\delta_{ij} - x_i^{*} B_{ij}\big)}{(B x^{*})_i}
\]

where $\delta_{ij}$ is the Kronecker delta.
It yields the upper-diagonal block structure:
\[
J(x^{*}) = \begin{pmatrix} 
J_{C_1 C_1} & 0 & \dots & J_{C_1\bar{S}} \\
0 & \ddots  & 0 & \vdots \\
0 & 0 & J_{C_p C_p} & J_{C_p\bar{S}} \\
0 & 0 & 0 & J_{\bar{S}\bar{S}} \end{pmatrix}
\]

where $J_{C_k C_k}$ corresponds to edges between nodes of the component $C_k$, 
$J_{\bar{S}\bar{S}}$ to edges between zero nodes, and 
$J_{C_k \bar{S}}$ to edges between the component $C_k$ and zero nodes.
The lower-left block is zero.

The diagonal blocks are:
\begin{align*}
J_{C_k C_k} &= I_{C_k} - \diag(x^*_{C_k})B_{C_k} \\
J_{\bar{S}\bar{S}} &= \diag\left(\frac{1}{B_{\bar{S}} x^*_{\bar{S}}}\right)
\end{align*}
\end{lemma}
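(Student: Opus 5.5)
The plan is to differentiate the map $\GN{B}(x)_i = x_i/(Bx)_i$ directly by the quotient rule, evaluate the result at the fixed point, and then read off the block structure from two facts: fixed-point normalization on the support, and the zero pattern of $x^*$ together with the absence of edges between different components of $S$.

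\textbf{Step 1: the general Jacobian.} Since $\partial (Bx)_i/\partial x_j = B_{ij}$, the quotient rule gives, for any normalizable $x$,
\[
\frac{\partial \GN{B}(x)_i}{\partial x_j} = \frac{\delta_{ij}}{(Bx)_i} - \frac{x_i B_{ij}}{(Bx)_i^2}
= \frac{1}{(Bx)_i}\Big(\delta_{ij} - \frac{x_i}{(Bx)_i} B_{ij}\Big).
\]

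\textbf{Step 2: evaluation at the fixed point.} At $x^*$, each coordinate satisfies either $x_i^*=0$ or $(Bx^*)_i = 1$. In both cases $x_i^*/(Bx^*)_i = x_i^*$: if $x_i^*=0$ both sides vanish, and otherwise the denominator is $1$. Substituting into Step 1 yields the stated formula $J_{ij} = (\delta_{ij} - x_i^* B_{ij})/(Bx^*)_i$. The denominators are well defined because $x^*$ is normalizable, so $(Bx^*)_i > 0$ for every $i$.

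\textbf{Step 3: block structure.} I will order the indices as $C_1,\dots,C_p,\bar S$ and treat the blocks row by row.
\begin{itemize}
\item For rows $i\in\bar S$ we have $x_i^*=0$, so $J_{ij} = \delta_{ij}/(Bx^*)_i$. Hence $J_{\bar S S}=0$, which is the zero lower-left block, and $J_{\bar S\bar S} = \diag(1/(Bx^*)_{\bar S})$. I read the paper's notation $B_{\bar S}x^*_{\bar S}$ as the restriction of $Bx^*$ to $\bar S$. This step needs care, since literally $x^*_{\bar S}=0$.
\item For rows $i\in C_k$ we have $(Bx^*)_i=1$, so $J_{ij} = \delta_{ij} - x_i^*B_{ij}$. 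If $j\in C_l$ with $l\neq k$, then $i\ne j$, and $i,j$ are non-adjacent because they lie in distinct connected components of the induced subgraph on $S$. Thus $A_{ij}=0$, so $B_{ij}=0$ and the off-diagonal support blocks vanish.
\item The diagonal blocks are then $J_{C_kC_k} = I - \diag(x^*_{C_k})B_{C_kC_k}$. The blocks $J_{C_k\bar S} = -\diag(x^*_{C_k})B_{C_k\bar S}$ are generally nonzero.
\end{itemize}

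\textbf{Main obstacle.} None of these steps is deep. The main risk is bookkeeping: making sure the fixed-point identity $x_i^*/(Bx^*)_i = x_i^*$ is used correctly on zero nodes, where $(Bx^*)_i\neq 1$ in general, and that normalizability guarantees every $(Bx^*)_i>0$.
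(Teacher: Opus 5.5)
Your proposal is correct and follows the paper's proof essentially step for step: quotient-rule differentiation, then the fixed-point identity $x_i^*/(Bx^*)_i = x_i^*$, then a row-by-row reading of the block structure using $(Bx^*)_i=1$ on $S$ and $x_i^*=0$ off $S$. Your explicit check that $J_{\bar S S}=0$ is right, as is your reading of $B_{\bar S}x^*_{\bar S}$ as the restriction $(Bx^*)_{\bar S}$ (taken literally, that expression vanishes); both are slightly more careful than the paper's own write-up.
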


\begin{proof}
$$
J_{ij}(x) = \frac{\partial}{\partial x_j}\Big[\frac{x_i}{(B x)_i}\Big] = \frac{\delta_{ij}(B x)_i - x_i B_{ij}}{(B x)_i^{2}} 
= \frac{1}{(B x)_i}\big(\delta_{ij} - \frac{x_i}{(B x)_i}B_{ij}\big)
$$

Therefore at a fixed point $x^*$, verifying $x^* = x^* / Bx^*$:
$$
J_{ij}(x^{*}) =\frac{\big(\delta_{ij} - x_i^{*} B_{ij}\big)}{(B x^{*})_i}
$$

For $i \in S$, we have $x_i^{*} > 0$ and thus $(Bx^*)_i = 1$, hence:
$$
J_{ij}(x^{*}) = \delta_{ij} - x_i^{*} B_{ij}
$$

Note that if $j$ also belongs to $S$, i.e. if $x^*_j >0$, then $J_{ij}(x^*)\ne 0$ if and only if $j$ is a neighbor of $i$ belonging to the same connected component $C_k$ of the support as $i$. Hence the diagonal block structure of $J_{SS}$.

For $i \notin S$, we have $x_i^{*} = 0$, and thus $J_{\bar{S}\bar{S}}$ is diagonal:
$$
J_{ij}(x^{*}) = \frac{ \delta_{ij}}{(B x^{*})_i}
$$

Note that $(B x^{*})_i > 0$ because $x^{*} \in \mathsf{N}_A$.
\end{proof}

We now consider a fixed point $x^*$ of the WRGN map $\GN{\CLO{A}^{\gamma, v}}$ with $\gamma > 1$. 
We prove the items of Theorem~\ref{thm:binary-regularization} one by one.

\textbf{Every non-binary fixed point is repulsive.}
Suppose that $x^*$ is not binary.
Its support must then contain a connected component $C$ with at least 2 nodes.
Indeed, if every node $i \in S$ satisfied $x^*_j = 0$ for all $j \in N(i)$, 
then $(Bx^*)_i = B_{ii}x^*_i = x^*_i$, which implies $x^*_i = x^*_i/x^*_i = 1$, 
meaning $x^*$ would be binary.

We analyze the Jacobian block $J_{CC}$. To avoid clutter, we restrict all notations
implicitly to the component $C$: $J = I - \diag(x^*) \CLO{A}^{\gamma, v}$.
Let $X = \diag(x^*)$. Since $J$ is similar to $M = X^{-1/2} J X^{1/2}$, they share 
the same eigenvalues. We have:
\[ 
M = I - X^{1/2} \CLO{A}^{\gamma, v} X^{1/2}.
\]

Let $Q = X^{1/2} \CLO{A}^{\gamma, v} X^{1/2}$. Since $\CLO{A}^{\gamma, v} = V^{-1}\CLO{A}^\gamma V$ 
and $V, X$ are positive diagonal matrices, $Q$ is congruent to $\CLO{A}^\gamma$. 
By Sylvester's Law of Inertia, $Q$ and $\CLO{A}^\gamma$ share the same number of negative eigenvalues. 
We examine the minimum eigenvalue of $\CLO{A}^\gamma$:
\[
\lambda_{\min}(\CLO{A}^\gamma) = \gamma \lambda_{\min}(A) + 1.
\]

For any \emph{connected} graph with at least one edge, $\lambda_{\min}(A) \leq -1$ \cite{cvetkovic1980spectra}. 
Thus, $\lambda_{\min}(\CLO{A}^\gamma) \leq 1 - \gamma < 0$ for $\gamma > 1$.
By congruence, $Q$ possesses at least one negative eigenvalue $\lambda_Q < 0$. 
The corresponding eigenvalue of the Jacobian block $M$ is $\lambda_M = 1 - \lambda_Q > 1$. 
Therefore, the spectral radius $\rho(J(x^*)) \ge \lambda_M > 1$, establishing that 
the fixed point is strictly repulsive.

\textbf{Condition of stability of binary fixed points.}
Now assume that $x^*$ is binary. 
It is necessarily a MIS hence all the connected components of its support $S=\supp(x^*)$ are isolated nodes.
For a node $i$ with $x_i=1$, the Jacobian bloc $J_{ii}$ corresponding 
to its support is a $1\times 1$ bloc with value:
$J_{ii} = I_{ii} - \diag(x^*)_{ii} (\CLO{A}^{\gamma, v})_{ii} = 1 - 1*1 = 0$.
The corresponding eigenvalue is thus $0$.

The non-support block $J_{\bar{S}\bar{S}}(x^*)$ is diagonal with entries for $i\notin S$: 
$J_{ii}(x^*) = \frac{1}{(\CLO{A}^{\gamma, v}x^*)_i} $
which are the eigenvalues of $J_{\bar{S}\bar{S}}(x^*)$.
Developing the denominator:
\[ 
(\CLO{A}^{\gamma, v}x^*)_i = \sum_{j=1}^n B^{\gamma, v}_{ij} x^*_j = \sum_{j \in S} \frac{v_j}{v_i} (\gamma A_{ij} + \delta_{ij}).
\]

Since $i \notin S$, the Kronecker delta $\delta_{ij} = 0$. 
Substituting $v = \sqrt{w}$:
\[
(\CLO{A}^{\gamma, v}x^*)_i = \gamma \sum_{j \in N(i) \cap S} \sqrt\frac{w_j}{w_i}.
\]

Therefore 
\[
\rho(J(x^*)) = \rho(J_{\bar{S}\bar{S}}(x^*)) = 
\max_{i \notin S} \left( \frac{1}{\gamma \sum_{j \in N(i) \cap S} \sqrt{\frac{w_j}{w_i}}} \right)
\]

Asymptotic stability requires $\rho(J(x^*)) < 1$ which gives the 
stability condition provided in the theorem.

\textbf{Stability of the MWISes.}
Let $S$ be is a MWIS.
Because the function $f(z) = \sqrt{z}$ is concave: $\sum \sqrt{w_j} \geq \sqrt{\sum w_j}$.
Therefore:
\begin{align*}
\stab_{\gamma, w}(S) &= \min_{i \notin S} \left( \gamma \sum_{j \in N(i) \cap S} \sqrt\frac{w_j}{w_i} \right)  \\
&= \min_{i \notin S} \left( \gamma \frac{\sum_{j \in N(i) \cap S} \sqrt{w_j}}{\sqrt{w_i}} \right)  \\
&\geq \min_{i \notin S} \left( \gamma \frac{\sqrt{\sum_{j \in N(i) \cap S} w_j}}{\sqrt{w_i}} \right)  \\
& = \gamma \min_{i \notin S} \sqrt{ \sum_{j \in N(i) \cap S} \frac{w_j}{w_i} }\\
\end{align*}

As $S$ is a MWIS, for any node $i \notin S$, 
the set $(S \cup \{i\}) \setminus (N(i) \cap S)$ is another independent set,
whose weight must be less than or equal to the weight of $S$:
\[
\forall i\notin S: \quad
w_i - \sum_{j \in N(i) \cap S} w_j \leq 0 \implies \forall i\notin S: \quad\sum_{j \in N(i) \cap S} \frac{w_j}{w_i} \geq 1.
\]

Hence:
\[
\stab_{\gamma, w}(S) \geq \gamma \min_{i \notin S} \sqrt{ \sum_{j \in N(i) \cap S} \frac{w_j}{w_i} }
\geq \gamma > 1.
\]

\textbf{Convergence to a MIS.}
As WRGN always converges and it must converge to a stable attractor, it necessarily 
converges to a MIS verifying the stability condition above.
As the MWISes are always stable (and also the global minimizers of the energy), 
there is at least one such stable attractor 
(conversely if it was not the case we would have a contradiction).

\section{Energy Landscapes and Phase Transitions}
\label{app:phase-transitions}
We explore $1$D energy profiles on the simplex as a function of $\gamma$ and $w$ in order
to illustrate the phase transition mechanism.

Consider the weighted complete graph $(K_2, w)$, with two nodes and a single edge between them.
We number the nodes $0$ and $1$ and parameterize their simplex state by $p_0=1-x$ and $p_1=x$,
so that $x=0$ corresponds to the binary solution $p=(1, 0)$ indicator of the MIS $M_0=\{0\}$,
$x=1$ to $p=(0,1)$ indicator of the MIS $M_1=\{1\}$,
and intermediate states $p=(1-x, x)$ correspond to fuzzy sets solutions.
As energy minima only depend on the ratio $w_0/w_1$, we fix the weight $w_1=1$.

The energy for this system is the quadratic function:
\begin{align*}
\mathcal{E}_{\gamma, w_0}(x) &= ((1-x)/\sqrt{w_0}, x) \begin{pmatrix} 1 & \gamma \\ \gamma & 1 \end{pmatrix} ((1-x)/\sqrt{w_0}, x) \\
&= x^2 + \frac{2\gamma}{\sqrt{w_0}} x(1-x) + \frac{1}{w_0}(1-x)^2
\end{align*}

The graph of $\mathcal{E}_{\gamma, w_0}$ is a parabola with its apex 
at $\mathcal{E}_{\gamma, w_0}'(x_a)=0$, which gives:
\[
x_a = \frac{1 - \gamma\sqrt{w_0}}{1+w_0-2\gamma\sqrt{w_0}}.
\]

The curvature of the parabola is $C=\mathcal{E}_{\gamma, w_0}''(x)=2-4\gamma/\sqrt{w_0}+2/w_0$.
It is zero, i.e. the parabola is a line, for $\gamma_F = \frac{1}{2}(\sqrt{w_0}+\frac{1}{\sqrt{w_0}})$.

The graphs of this energy are represented in Figure~\ref{fig:K2} for different values of $\gamma$ and $w_0$.
As states outside the simplex $\Delta_1 = [0,1]$ are prohibited, we represent them with infinite energies, which 
corresponds to energy barriers at $x=0$ and $x=1$.
Local minima (stable optima) of the energy are represented by black disks.
Local maxima (unstable optima) of the energy are represented by white disks.

\begin{figure}[h]
\centering
\caption{Energy profiles on $K_2$}
\label{fig:K2}
\setlength{\tabcolsep}{3pt}
\begin{tabular}{@{}lccccccc@{}}
$w_0$ & $\gamma=\frac{1}{4}$ & $\gamma=\frac{1}{2}$ & $\gamma=\frac{1}{\sqrt{2}}$ & $\gamma=1$ & $\gamma=\sqrt{2}$ & $\gamma=2$ & $\gamma=4$ \\
$1$ & 
\includegraphics[align=c, width=0.12\textwidth]{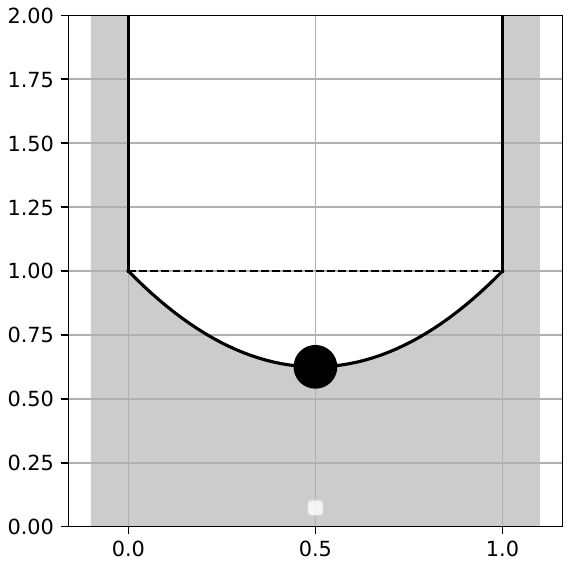} & 
\includegraphics[align=c, width=0.12\textwidth]{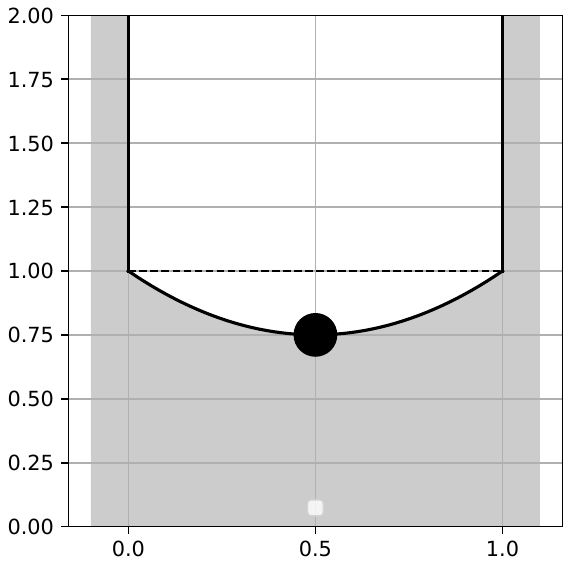} & 
\includegraphics[align=c, width=0.12\textwidth]{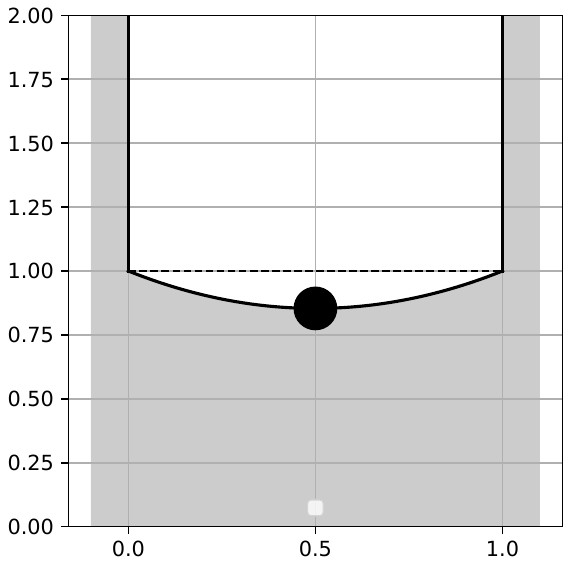} & 
\includegraphics[align=c, width=0.12\textwidth]{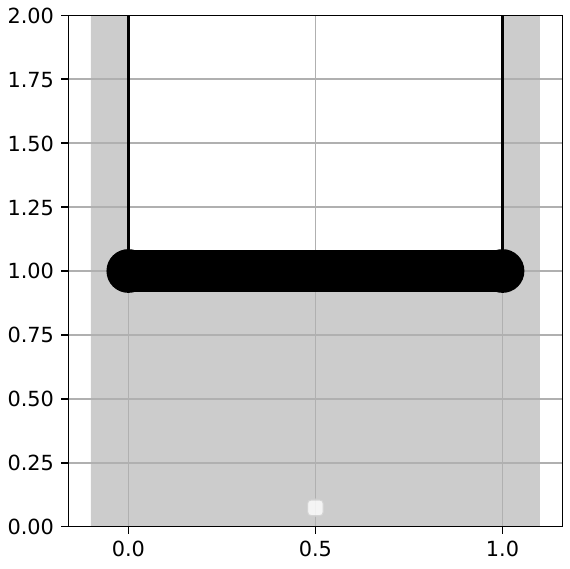} & 
\includegraphics[align=c, width=0.12\textwidth]{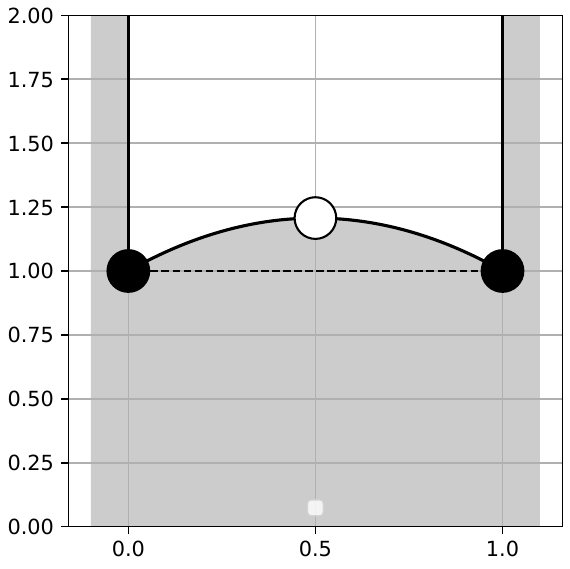} & 
\includegraphics[align=c, width=0.12\textwidth]{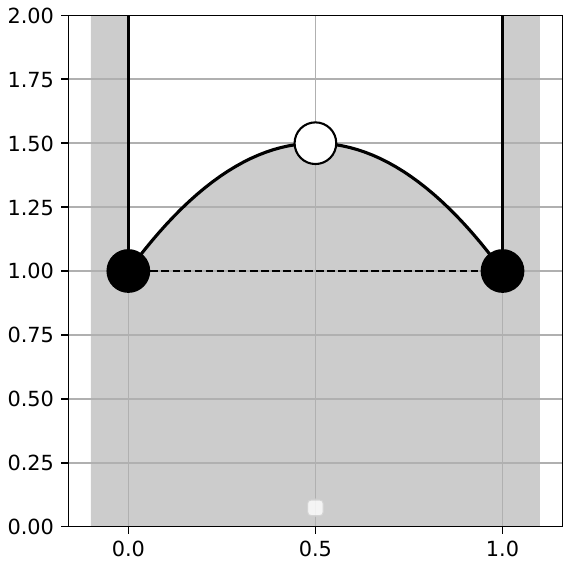} & 
\includegraphics[align=c, width=0.12\textwidth]{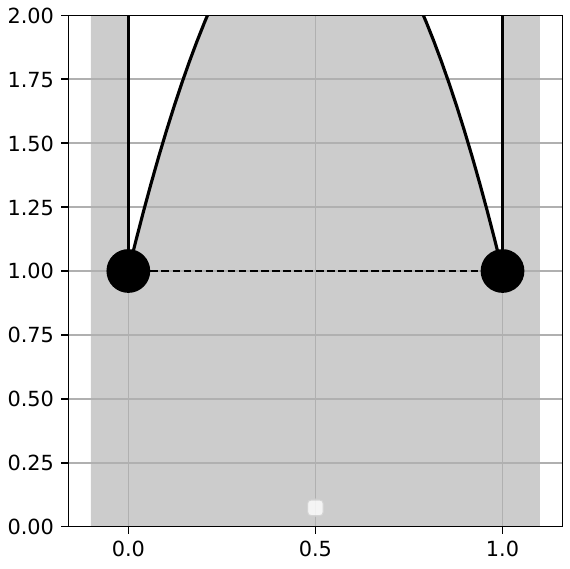} \\
$2$ &
\includegraphics[align=c, width=0.12\textwidth]{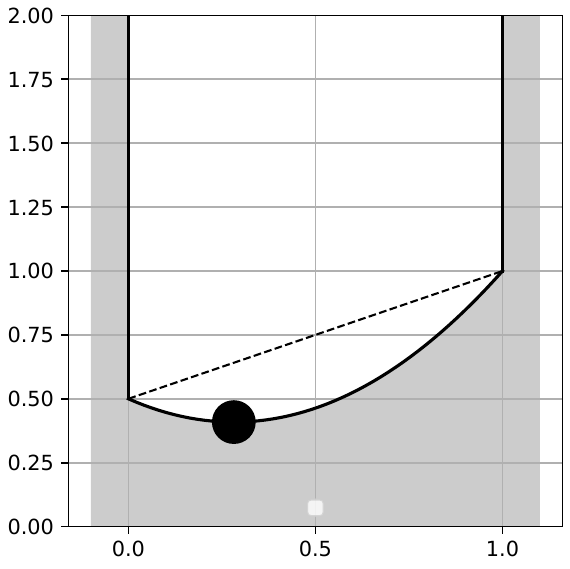} & 
\includegraphics[align=c, width=0.12\textwidth]{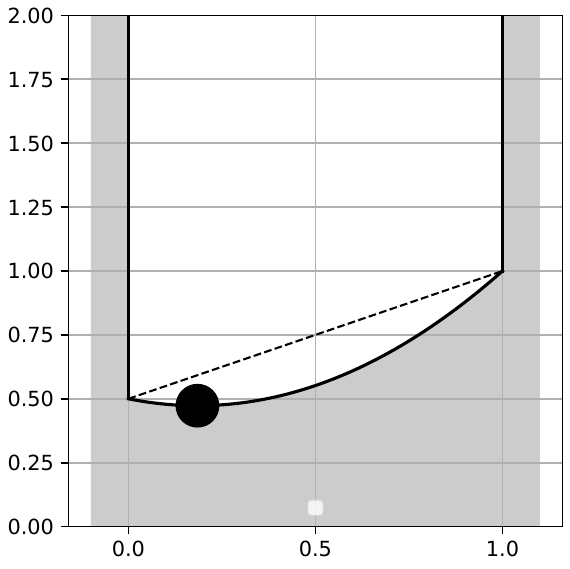} & 
\includegraphics[align=c, width=0.12\textwidth]{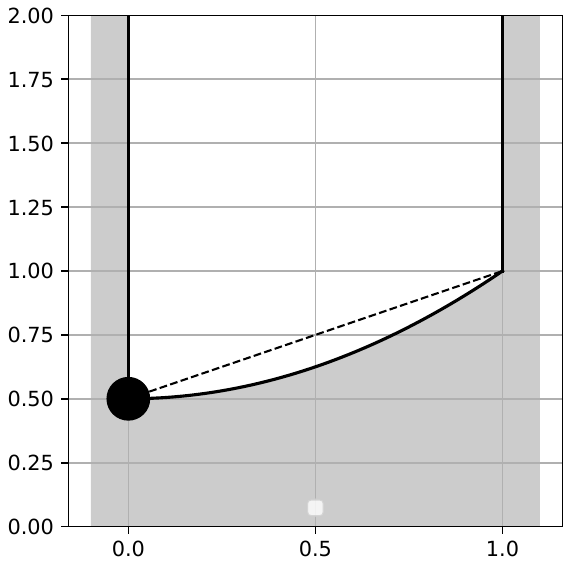} & 
\includegraphics[align=c, width=0.12\textwidth]{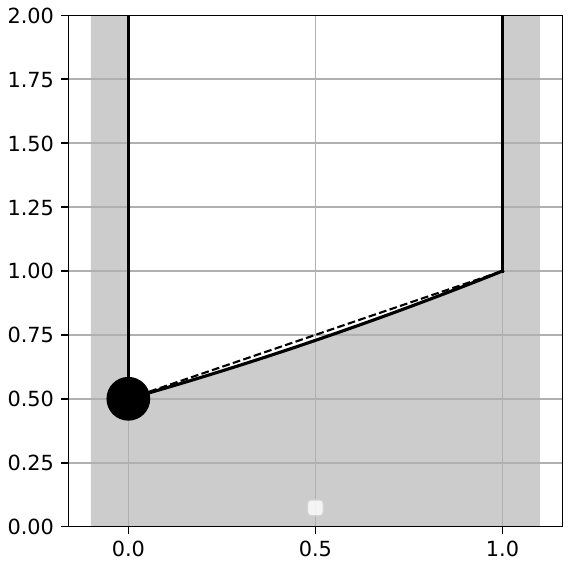} & 
\includegraphics[align=c, width=0.12\textwidth]{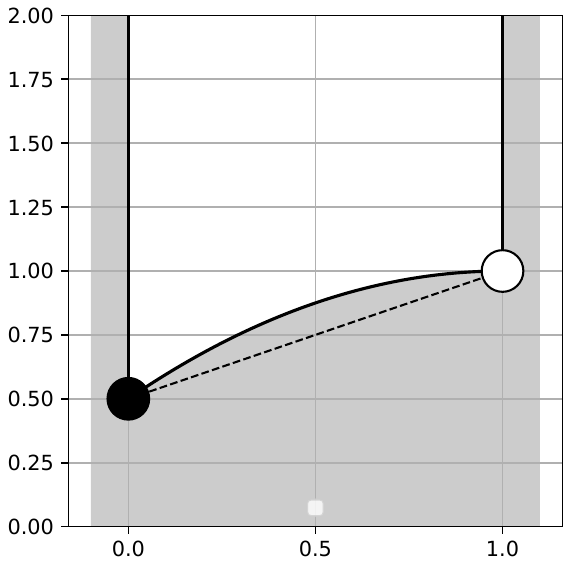} & 
\includegraphics[align=c, width=0.12\textwidth]{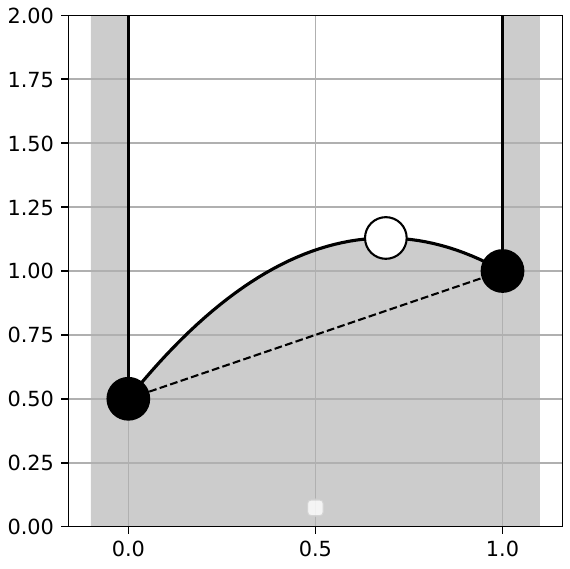} & 
\includegraphics[align=c, width=0.12\textwidth]{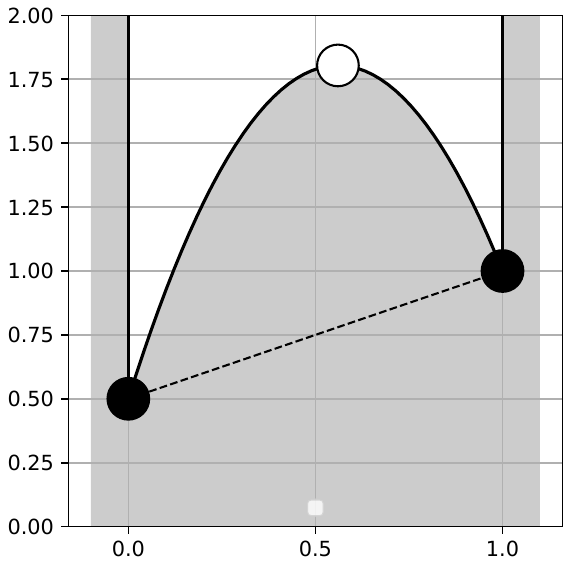} \\
$4$ & 
\includegraphics[align=c, width=0.12\textwidth]{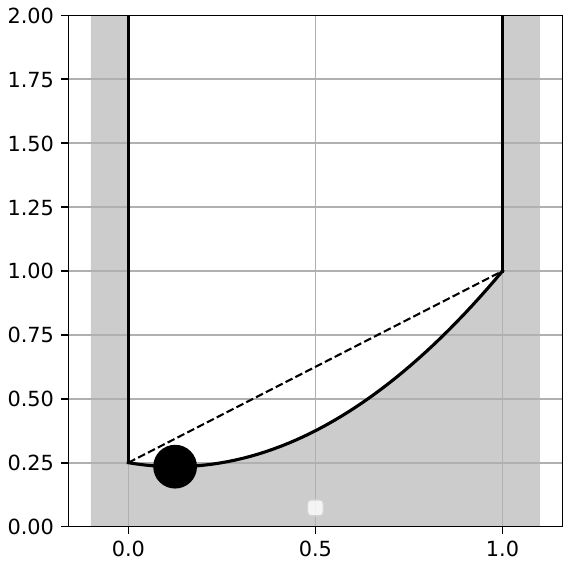} & 
\includegraphics[align=c, width=0.12\textwidth]{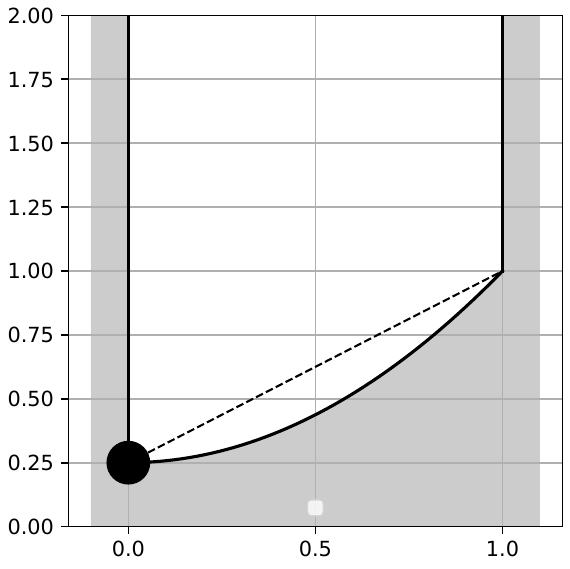} & 
\includegraphics[align=c, width=0.12\textwidth]{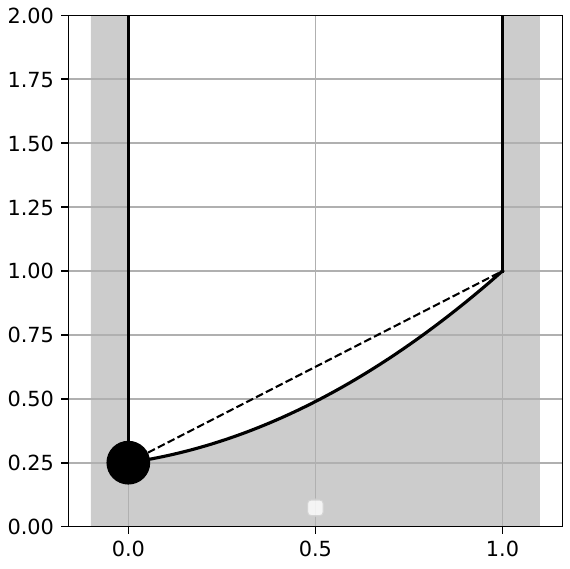} & 
\includegraphics[align=c, width=0.12\textwidth]{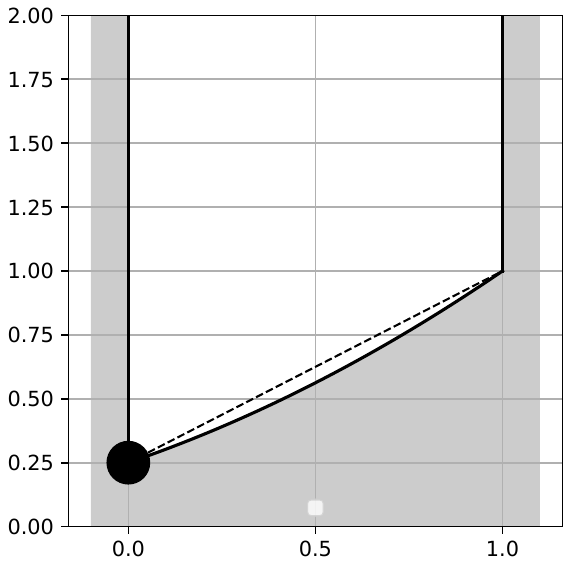} & 
\includegraphics[align=c, width=0.12\textwidth]{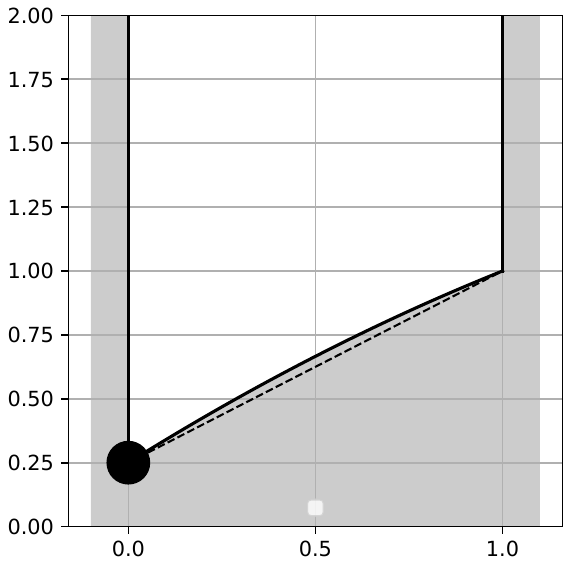} & 
\includegraphics[align=c, width=0.12\textwidth]{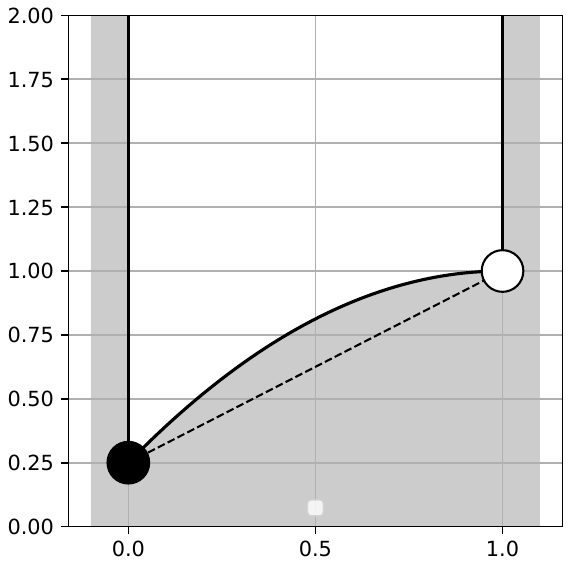} & 
\includegraphics[align=c, width=0.12\textwidth]{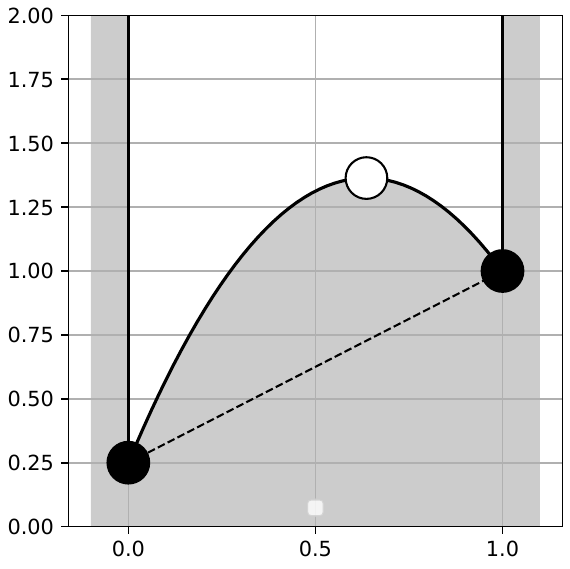} \\
\end{tabular}
\end{figure}

\textbf{1. Unweighted case ($w_0=1$).}

For $\gamma < 1$, the energy is strictly convex with a unique minimum at $x=1/2$.
Iterated GN converges to the fractional point $(1/2, 1/2)$ 
from any initialization\footnote{Note that $\gamma=0$ is a special case which corresponds to completely 
removing the edges from the graph, leaving the WRGN update: $x^{k+1}_i = x^{k}_i / (x^{k}_i + 0) = 1$.
As in WRGN, the domain constraints are enforced by the image of the map itself,
$\gamma = 0$ enforces the unique solution $x=\1$ which is the MIS of the 
graph with $n$ nodes and no edges.}.

For $\gamma = 1$, the energy is flat and any $x\in[0,1]$ a stable minimum.
Any positive initialization is projected to the simplex in $1$ iteration 
of graph normalization and stable after that.

For $\gamma > 1$, the energy is strictly concave, with two local minima at $x=0$ and $x=1$, 
which correspond the two binary MISes of the graph.
Iterated GN converges to $(1, 0)$ if initialized at $x^0 < 1/2$ and to $(0, 1)$ if 
initialized $x^0 > 1/2$. The apex $x_a=1/2$ is an unstable local maximum: 
initialization exactly on $1/2$ remains fixed but any slight deviation make the system fall 
to one of the two MISes of the graph.

\textbf{2. Weighted case ($w_0 > 1$).}

As show by the Tilted Simplex Motzkin-Straus Theorem~\ref{thm:motzkin-straus}, 
$w_0 \ne 1$ corresponds to tilting the simplex.
The energy value on the MIS at $x=0$ is $\mathcal{E}_{\gamma, w_0}(0) = 1/w_0$.

As $\gamma$ increases, the system goes through two phase transitions.

For $\gamma = 0$, the parabola is convex ($C = 2+2/w_0 > 0$) and its unique global minimum 
is at $x_a = \frac{w_0}{1+w_0} \in (0, 1)$. 
WRGN thus converges to the fractional solution $(1-x_a, x_a)$ from any initialization.

As $\gamma$ increases $x_a$ decreases until it crosses $0$ at $\gamma_B = 1/\sqrt{w_0}$.
It corresponds to the transition where WRGN now converges to the binary MIS $(1, 0)$ from any initialization.
For $w_0=2$, the transition happens at $\gamma_B = 1/\sqrt{2}$, and for $w_0=4$ at $\gamma_B = 1/2$, i.e. closer to $0$.

Further increasing $\gamma$, the curvature of the parabola crosses $0$ at $\gamma_F$, 
where it becomes a straight line before turning concave. 
However, because this line is tilted by the weight ratio, 
$x=0$ remains the unique minimum. 
A second local minimum (the suboptimal MIS) only emerges when the apex $x_a$ crosses $1$ at 
$\gamma_D = \sqrt{w_0}$. For $\gamma > \gamma_D$, the suboptimal MIS becomes $\gamma$-Stable, 
and the system exhibits the local-minimum correspondence predicted by Theorem~\ref{thm:motzkin-straus}.

\textbf{Generalization to other graphs.}

For a general weighted graph $(A,w)$, if we consider two adjacent nodes 
$i,j$, then the energy on the $1$D simplex slice given by $p_i = 1-x$, $p_j=x$ and $p_k = 0$ for $k\ne i,j$ 
has the same general profile than in the $K_2$ case and the same 
succession of transitions occur as $\gamma$ increases.
More generally, any $1$D slice of the simplex gives a $1$D quadratic energy which can 
have either $1$ or $2$ minima. To have $1$ fractional minimum, 
the quadratic function must be convex on the slice, with its minimum in $(0,1)$. 
On the contrary, if it has $2$ minima, they must be on the boundary of the domain 
and the energy must be concave along the slice.
In general, the energetical landscape is a combination of concave or convex profiles 
along $1$D directions.

\section{Proof of Theorem~\ref{thm:replicator} (WRGN as Nonlinear Replicator)}
\label{app:proof-RD}

By the definition of the simplex state $p^k$, 
we have $y_i^k = \frac{M_v^k p_i^k}{v_i}$,
or in vector form  $y^k = M_v^k p^k \oslash v$. 
Substituting this into the WRGN update rule:
\[
y_i^{k+1} = \left( \frac{M_v^k p_i^k}{v_i} \right) \frac{v_i}{(\CLO{A}^{\gamma} (M_v^k p^k \oslash v))_i} 
= \frac{M_v^k p_i^k}{M_v^k (\CLO{A}^{\gamma} (p^k \oslash v))_i} = \frac{p_i^k}{(\CLO{A}^{\gamma} (p^k \oslash v))_i}
\]

Multiplying both sides by $v_i$:
\[
v_i y_i^{k+1} = p_i^k \frac{v_i}{(\CLO{A}^{\gamma} (p^k \oslash v))_i} = p_i^k f_i(p^k)
\]

Summing over $i$ yields the evolution of the weighted mass:
\[
M_v^{k+1} = \sum_i p_i^k f_i(p^k) = \bar{f}(p^k)
\]
which shows that it is equal to the average fitness at the previous generation.
As WRGN systematically increases $M_v^k$ out of equilibrium, the average fitness follows the same evolution.

The normalized weighted state $p_i^{k+1} = v_i y_i^{k+1} / M_v^{k+1}$ then reduces to the discrete replicator equation:
\[
p_i^{k+1} = p_i^k \frac{f_i(p^k)}{\bar{f}(p^k)}.
\]

\textbf{WRGN is a non-potential game.}
For a vector field $f(x)$ to be a gradient (i.e., $f = \nabla \Phi$), 
the Poincaré Lemma requires that its Jacobian matrix must be symmetric:
$\frac{\partial f_i}{\partial x_j} = \frac{\partial f_j}{\partial x_i}$.
For the WRGN fitness function, letting $S=\CLO{A}^\gamma$ and $q=p\oslash v$, gives the condition: 
\begin{equation}
    -\frac{v_i}{v_j} \frac{S_{ij}}{(Sq)_i^2} = -\frac{v_j}{v_i} \frac{S_{ji}}{(Sq)_j^2}
\end{equation}

As $S$ is symmetric ($S_{ij} = S_{ji}$):
\begin{equation}\frac{v_i^2}{(Sq)_i^2} = \frac{v_j^2}{(Sq)_j^2} \implies \frac{(Sq)_i}{v_i} = \frac{(Sq)_j}{v_j}\end{equation}

Even in the unweighted case ($v=\1$), this requires $(Sx)_i = (Sx)_j$ for all adjacent nodes, 
a condition that cannot be satisfied globally (for all $x$) for non-trivial graphs. 
Non-uniform weights render the condition even harder.

This shows that the WRGN fitness is not a gradient hence the WRGN game is a non-potential game.

\section{Proof of Theorem~\ref{thm:motzkin-straus} (Weight-Tilted Simplex Motzkin-Straus)}
\label{app:proof-motzkin-straus}

\textbf{Correspondence between stable fixed points of WRGN and minima of energy.}

Since $\TENv$ is a smooth quadratic and WRGN is an exact MM algorithm,
we have the stationarity property: every limit point $y^* \in \NA$ 
is a fixed point of the map, which implies it is a critical point of $\TENv$ satisfying the KKT conditions 
(specifically, the complementarity condition $y_i [(\CLO{A}^\gamma y)_i - v_i] = 0$).
While the set of limit points is the set of critical points, 
only the stable fixed points of the map correspond to the local minima of the energy.
As proven by Theorem~\ref{thm:binary-regularization}, 
for $\gamma > 1$, the only asymtotically stable fixed points of WRGN are 
the $\gamma$-stable MISes.

Now, WRGN is undefined at the non-normalizable points of the non-negative orthant. 
However those non-normalizable points are not local minima of $\TENv$.
Indeed, a point $y \in \Rpn$ is non-normalizable if there exists at least one index $i$ 
such that $(\CLO{A}^\gamma y)_i = 0$. 
The partial derivative of the energy in the direction $i$ is then:
\[
\frac{\partial \TENv}{\partial y_i} = (\CLO{A}^\gamma y)_i - v_i = 0 - v_i = -v_i
\]
Since $v_i = \sqrt{w_i} > 0$, the gradient is strictly negative in the direction of $i$. 
As the constraint $y_i \geq 0$ is active at that boundary, the point cannot satisfy the first-order necessary conditions for a local minimum. 
Moving from $y$ to $y + \epsilon e_i$ (where $e_i$ is the $i$-th basis vector) 
strictly decreases the energy for any $\epsilon > 0$.

As a conclusion, for any $\gamma>1$, 
the local minima of the energy $\TENv(y)$ on $\Rpn$ are in $1:1$ correspondence with the 
$\gamma$-stable MISes of the graph.

\textbf{Minima on the Simplex.}

We establish the simplex form of the WRGN energy. 
For a weighted state $y\in \Rpn$, let $M_v = \sum_i v_i y_i$ 
denote the weighted mass and $p = (v \odot y) / M_v$ the simplex state, 
such that $y = M_v (p \oslash v)$. 
Substituting into the energy $\TENv(y) = \frac{1}{2}y^T \CLO{A}^\gamma y - v^T y$:
\begin{align*}
\TENv(M_v, p) &= \frac{M_v^2}{2} (p \oslash v)^T \CLO{A}^\gamma (p \oslash v) - M_v = \frac{M_v^2}{2} \mathcal{Q}_v(p) - M_v 
\end{align*}
where $\mathcal{Q}_v(p) := (p\oslash v)^T (I + \gamma A) (p \oslash v)$. 

For any fixed $p \in \Delta_{n-1}$, 
since $\mathcal{Q}_v(p) \ge 0$, 
the energy $\TENv(M_v, p)$ is a strictly convex quadratic function of $M_v$. 
The optimal mass along the ray defined by $p$ is found where $\frac{\partial\TENv}{\partial M_v} = 0$, 
yielding $M_v^*(p) = 1/\mathcal{Q}_v(p)$. Substituting this into the energy yields the reduced objective:
\[
\TENv(p) := \TENv(M_v^*(p), p) = -\frac{1}{2 \mathcal{Q}_v(p)}.
\]

Because $x \mapsto -1/x$ is a monotonically increasing function for $x>0$, 
minimizing $\TENv(p)$ is equivalent to minimizing $\mathcal{Q}_v(p)$.
Hence minimizing $\TENv$ over $\Rpn$ is equivelent to minimizing $\mathcal{Q}_v(p)$ over $\Delta_{n-1}$. 

\textbf{Minima on the Weight-tilted Simplex.} 
The change of variables $r = p \oslash v$ is a bijection between the standard simplex $\Delta_{n-1}$ 
and the weight-deformed manifold $\Delta^v_{n-1} := \{ r \in \Rpn \mid \sum_i v_i r_i = 1 \}$. 
Under this transformation, the quadratic form becomes:
\[
\mathcal{Q}_v(p) = (p \oslash v)^T \CLO{A}^\gamma (p \oslash v) = r^T \CLO{A}^\gamma r =: \mathcal{Q}(r).
\]
Consequently, minimizing the WRGN energy $\TENv(y)$ over $\Rpn$ is equivalent to finding:
\[
\min_{r \in \Delta^v_{n-1}} r^T (I + \gamma A) r.
\]

Hence the correspondence between $\gamma$-stable MISes of the graph and the local minima 
of $r^T (I + \gamma A) r$ over $\Delta^v_{n-1}$.

\section{Iterative Graph Normalization Pytorch Module}
\label{sec:appendix-code}

\begin{lstlisting}[language=Python, caption={Iterative Graph Normalization Pytorch Module}]
class IterativeGraphNormalization(nn.Module):
    def __init__(self, 
                 A: torch.sparse_coo_tensor, 
                 weights: np.ndarray, 
                 batch_size: int = 256, 
                 device: str = 'cpu'):
        super().__init__()
        self.v = torch.sqrt(weights).unsqueeze(0) 
        self.A = A
        self.batch_size = batch_size
        self.device = device
        # uniform initialization of the parameters on the simplex
        # by exponential sampling
        params = -torch.rand(self.batch_size, self.v.shape[0], device=self.device).log()
        params = torch.clamp(params / torch.max(params), 1e-3, 1.)
        self.params = nn.Parameter(params)
       
    def forward(self, gamma0: float = 0.9, gamma1: float = 1.5, iterations: int = 1000):        
        for iter in range(iterations):
            p = float(iter) / float(iterations-1)
            gamma = p*gamma1 + (1-p)*gamma0
            y = self.v * x 
            Ay = torch.sparse.mm(self.A, y.t()).t()
            Ay.mul_(gamma).add_(y)             
            # safe division
            mask = Ay > 1e-9
            x[mask] = y[mask] / Ay[mask]
            x[~mask] = 0.5           
            # explicitly clear Ay to help memory management
            del Ay
        return torch.clamp_(x, 0, 1)
\end{lstlisting}

\section{Complete experimental results}
\label{app:exp-results}

\begin{table*}[ht]
\centering
\caption{Performance on AVR dataset}
\begin{tabular}{@{}lccccccccc@{}}
\hline \\
& & & \multicolumn{2}{c}{\textbf{Random start}} 
& \multicolumn{2}{c}{\textbf{Warm start}} \\
\textbf{Instance} & \textbf{Nodes} & \textbf{Edges} & 
\textbf{E[Gap]} & \textbf{Best Gap} & 
\textbf{E[Gap]} & \textbf{Best Gap} & 
\textbf{Time BS} &
\textbf{Time GN} \\ \hline
AVR\_000 & 863.4K & 331.2M &- & - & - &\textbf{8.89\%} & $289$s & $22498.8$s\\
AVR\_001 & 881.0K & 342.2M &- & - & - &\textbf{9.37\%} & $309$s & $4867.6$s\\
AVR\_002 & 881.9K & 344.1M &- & - & - &\textbf{8.93\%} & $297$s & $3751.7$s\\
AVR\_003 & 578.2K & 219.7M &- & - & - &\textbf{8.36\%} & $110$s & $2244.0$s\\
AVR\_004 & 270.1K & 94.1M &- & - & - &\textbf{7.61\%} & $19$s & $1098.4$s\\
AVR\_005 & 602.5K & 194.8M &- & - & - &\textbf{7.56\%} & $50$s & $1987.6$s\\
AVR\_007 & 651.9K & 220.5M &- & - & - &\textbf{8.02\%} & $170$s & $1907.5$s\\
AVR\_008 & 381.4K & 115.1M &- & - & - &\textbf{7.16\%} & $75$s & $1621.5$s\\
AVR\_009 & 163.8K & 43.0M &- & - & -&\textbf{5.70\%} & $10$s & $801.4$s\\
AVR\_010 & 411.9K & 283.9M &- & - & - &\textbf{7.77\%} & $190$s & $1489.5$s\\
AVR\_014 & 127.9K & 78.5M &- & - & - &\textbf{6.67\%} & $10$s & $1748.6$s\\
AVR\_015 & 266.4K & 144.6M &- & - & - &\textbf{8.50\%} & $15$s & $1642.7$s\\
AVR\_016 & 194.4K & 111.1M &- & - & - &\textbf{9.29\%} & $14$s & $1726.5$s\\
AVR\_017 & 83.1K & 37.9M &- & - & - &\textbf{6.32\%} & $2$s & $1121.7$s\\
AVR\_018 & 83.8K & 38.8M &- & - & - &\textbf{6.03\%} & $35$s & $1164.7$s\\
AVR\_019 & 14.1K & 44.2K &6.65\% & \textbf{6.30\%} & 1.01\% &\textbf{0.99\%} & $<1$s & $0.8$s\\
AVR\_020 & 21.5K & 130.5K &8.10\% & \textbf{7.59\%} & 1.27\% &\textbf{1.23\%} & $1$s & $1.5$s\\
AVR\_021 & 27.6K & 236.0K &8.59\% & \textbf{8.15\%} & 1.37\% &\textbf{1.33\%} & $1$s & $2.9$s\\
AVR\_022 & 30.5K & 296.4K &8.79\% & \textbf{8.22\%} & 1.57\% &\textbf{1.55\%} & $1$s & $4.1$s\\
AVR\_023 & 15.6K & 126.8K &22.98\% & \textbf{21.09\%} & 0.03\% &\textbf{0.03\%} & $<1$s & $1.4$s\\
AVR\_024 & 979 & 3.1K &4.27\% & \textbf{3.10\%} & 0.00\% &\textbf{0.00\%} & $<1$s & $0.1$s\\
AVR\_025 & 10.9K & 505.4K &6.89\% & \textbf{5.63\%} & 0.98\% &\textbf{0.70\%} & $<1$s & $4.2$s\\
AVR\_026 & 10.9K & 604.0K &6.05\% & \textbf{4.71\%} & 0.48\% &\textbf{0.39\%} & $<1$s & $5.0$s\\
AVR\_027 & 1.0K & 2.4K &2.53\% & \textbf{1.74\%} & 0.06\% &\textbf{0.06\%} & $<1$s & $0.2$s\\
AVR\_028 & 12.3K & 515.9K &4.17\% & \textbf{3.32\%} & 0.60\% &\textbf{0.54\%} & $<1$s & $4.6$s\\
AVR\_029 & 12.3K & 553.9K &3.82\% & \textbf{3.16\%} & 0.54\% &\textbf{0.43\%} & $<1$s & $4.6$s\\
AVR\_030 & 4.0K & 13.6K &7.16\% & \textbf{6.03\%} & 0.93\% &\textbf{0.90\%} & $<1$s & $0.4$s\\
AVR\_031 & 20.1K & 606.3K &9.69\% & \textbf{8.92\%} & 2.95\% &\textbf{2.68\%} & $<1$s & $7.6$s\\
AVR\_032 & 33.6K & 1.9M &10.44\% & \textbf{9.32\%} & 3.11\% &\textbf{2.99\%} & $<1$s & $15.5$s\\
AVR\_033 & 47.5K & 4.0M &10.97\% & \textbf{10.33\%} & 3.02\% &\textbf{2.76\%} & $<1$s & $27.2$s\\
AVR\_034 & 3.1K & 22.7K &26.31\% & \textbf{22.77\%} & 0.00\% &\textbf{0.00\%} & $<1$s & $0.4$s\\
AVR\_035 & 10.8K & 485.3K &25.97\% & \textbf{23.76\%} & 5.69\% &\textbf{4.43\%} & $17$s & $4.0$s\\
AVR\_036 & 18.0K & 1.5M &27.41\% & \textbf{25.34\%} & 25.10\% &\textbf{23.09\%} & $18$s & $12.7$s\\
AVR\_037 & 22.3K & 2.3M &28.11\% & \textbf{26.53\%} & 26.02\% &\textbf{24.27\%} & $18$s & $17.8$s\\
\hline
\end{tabular}
\end{table*}

\begin{table*}[ht]
\centering
\caption{Performance on MSCD dataset}
\begin{tabular}{@{}lccccccccc@{}}
\hline \\
& & & \multicolumn{2}{c}{\textbf{Random start}} 
& \multicolumn{2}{c}{\textbf{Warm start}} \\
\textbf{Instance} & \textbf{Nodes} & \textbf{Edges} & 
\textbf{E[Gap]} & \textbf{Best Gap} & 
\textbf{E[Gap]} & \textbf{Best Gap} & 
\textbf{Time BS} &
\textbf{Time GN} \\ \hline
MSCD\_000 & 6.0K & 138.7K &1.37\% & \textbf{1.13\%} & 0.67\% &\textbf{0.53\%} & $1$s & $1.5$s\\
MSCD\_001 & 8.0K & 197.8K &1.91\% & \textbf{1.42\%} & 0.77\% &\textbf{0.62\%} & $1$s & $1.8$s\\
MSCD\_002 & 6.2K & 131.2K &1.18\% & \textbf{0.84\%} & 0.51\% &\textbf{0.44\%} & $1$s & $1.4$s\\
MSCD\_003 & 5.4K & 80.1K &3.85\% & \textbf{2.99\%} & 0.98\% &\textbf{0.59\%} & $1$s & $0.9$s\\
MSCD\_004 & 5.1K & 66.6K &4.42\% & \textbf{3.57\%} & 0.71\% &\textbf{0.67\%} & $1$s & $0.7$s\\
MSCD\_005 & 7.2K & 162.4K &3.45\% & \textbf{2.78\%} & 0.80\% &\textbf{0.64\%} & $1$s & $1.6$s\\
MSCD\_006 & 6.7K & 155.1K &5.26\% & \textbf{4.14\%} & 1.45\% &\textbf{0.94\%} & $1$s & $1.5$s\\
MSCD\_007 & 5.4K & 80.4K &4.31\% & \textbf{3.50\%} & 0.68\% &\textbf{0.53\%} & $1$s & $0.9$s\\
MSCD\_008 & 6.1K & 140.7K &1.49\% & \textbf{1.13\%} & 0.45\% &\textbf{0.45\%} & $1$s & $1.3$s\\
MSCD\_009 & 5.8K & 115.8K &1.32\% & \textbf{0.96\%} & 0.51\% &\textbf{0.41\%} & $1$s & $1.1$s\\
MSCD\_010 & 7.3K & 176.2K &2.04\% & \textbf{1.45\%} & 0.63\% &\textbf{0.54\%} & $1$s & $1.6$s\\
MSCD\_011 & 5.7K & 79.9K &4.72\% & \textbf{3.93\%} & 0.57\% &\textbf{0.50\%} & $1$s & $0.9$s\\
MSCD\_012 & 5.2K & 95.8K &0.98\% & \textbf{0.77\%} & 0.40\% &\textbf{0.40\%} & $1$s & $0.9$s\\
MSCD\_013 & 7.5K & 183.0K &1.97\% & \textbf{1.59\%} & 0.64\% &\textbf{0.51\%} & $1$s & $1.7$s\\
MSCD\_014 & 6.4K & 155.5K &0.91\% & \textbf{0.79\%} & 0.58\% &\textbf{0.49\%} & $1$s & $1.5$s\\
MSCD\_015 & 5.9K & 148.9K &7.56\% & \textbf{5.79\%} & 1.58\% &\textbf{1.34\%} & $1$s & $1.3$s\\
MSCD\_016 & 6.2K & 94.0K &4.88\% & \textbf{3.95\%} & 0.77\% &\textbf{0.55\%} & $1$s & $1.3$s\\
MSCD\_017 & 6.4K & 142.4K &1.81\% & \textbf{1.44\%} & 0.63\% &\textbf{0.55\%} & $1$s & $1.4$s\\
MSCD\_018 & 7.7K & 189.3K &1.63\% & \textbf{1.22\%} & 0.67\% &\textbf{0.58\%} & $1$s & $1.9$s\\
MSCD\_019 & 6.1K & 129.9K &1.19\% & \textbf{0.91\%} & 0.50\% &\textbf{0.43\%} & $1$s & $1.3$s\\
MSCD\_020 & 7.8K & 184.1K &3.26\% & \textbf{2.72\%} & 1.06\% &\textbf{0.89\%} & $1$s & $1.7$s\\
\hline
\end{tabular}
\end{table*}

\end{document}